\newcommand{\wass}{\mathcal{W}}
\newcommand{\hmun}{\hat{\mu}_n}
\newcommand{\hnun}{\hat{\nu}_m}
\newcommand{\bmU}{\mathcal{U}}
\newcommand{\bmA}{\mathcal{A}}
\newcommand{\bmO}{\mathcal{O}}
\newcommand{\bmD}{\mathcal{D}}
\newcommand{\Wmom}{\mathcal{W}_{\text{MoM}}}
\newcommand{\Wmoudiag}{\mathcal{W}_{\text{MoU-diag}}}
\newcommand{\Wmou}{\Wmoudiag}
\newcommand{\Wmoucb}{\mathcal{W}_{\text{MoU}}}
\newcommand{\tWmom}{\widetilde{\mathcal{W}}_{\text{MoM}}}
\newcommand{\tWmou}{\widetilde{\mathcal{W}}_{\text{MoU-diag}}}
\newcommand{\tWmoucb}{\widetilde{\mathcal{W}}_{\text{MoU}}}
\newcommand{\tWmoudiag}{\widetilde{\mathcal{W}}_{\text{MoU-diag}}}
\newcommand{\tW}{\widetilde{\mathcal{W}}}
\newcommand{\tauX}{\tau_{\mathbf{X}}}
\newcommand{\tauY}{\tau_{\mathbf{Y}}}
\newcommand{\KX}{K_{\mathbf{X}}}
\newcommand{\KY}{K_{\mathbf{Y}}}
\newcommand{\BX}{B_{\mathbf{X}}}
\newcommand{\BY}{B_{\mathbf{Y}}}
\newcommand{\X}{\mathbf{X}}
\newcommand{\Y}{\mathbf{Y}}
\newcommand{\compact}{\mathcal{K}}
\algrenewcommand\algorithmicrequire{\textbf{Precondition:}}
\algrenewcommand\algorithmicensure{\textbf{Postcondition:}}
\newtheorem{theorem}{Theorem}
\newtheorem{definition}[theorem]{Definition}
\newtheorem{proposition}[theorem]{Proposition}
\newtheorem{lemme}[theorem]{Lemma}
\newtheorem{remark}[theorem]{Remark}
\newtheorem{assumption}[theorem]{Assumption}
\begin{document}
%

%
\runningauthor{Guillaume Staerman, Pierre Laforgue, Pavlo Mozharovskyi, Florence d'Alch\'{e}-Buc}

\twocolumn[

\aistatstitle{When OT meets MoM: Robust estimation of Wasserstein Distance}

\aistatsauthor{Guillaume Staerman \And Pierre Laforgue \And Pavlo Mozharovskyi \And Florence d'Alch\'{e}-Buc }

\aistatsaddress{ LTCI, T\'{e}l\'{e}com Paris, Institut Polytechnique de Paris } ]

\begin{abstract}
Issued from Optimal Transport, the Wasserstein distance has gained importance in Machine Learning due to its appealing geometrical properties and the increasing availability of efficient approximations.
It owes its recent ubiquity in generative modelling and variational inference to its ability to cope with distributions having non overlapping support.
In this work, we consider the problem of estimating the Wasserstein distance between two probability distributions when observations are polluted by outliers. To that end, we investigate how to leverage a Medians of Means (MoM) approach to provide robust estimates. 
Exploiting the dual Kantorovitch formulation of the Wasserstein distance, we introduce and discuss novel MoM-based robust estimators whose consistency is studied under a data contamination model and for which convergence rates are provided. Beyond computational issues, the choice of the partition size, \textit{i.e.,} the unique parameter of theses robust estimators, is investigated in numerical experiments. Furthermore, these MoM estimators make Wasserstein Generative Adversarial Network (WGAN) robust to outliers, as witnessed by an empirical study on two benchmarks CIFAR10 and Fashion MNIST.

\end{abstract}

\section{Introduction}
Computing distances between probability distributions has become a central question in numerous modern Machine Learning applications, ranging from generative modeling to clustering.
%
%
%
Optimal Transport (OT) \citep{Villani,Santambrogio} offers an appealing and insightful tool to solve this problem, building upon the Wasserstein distance.
%
%
%
Given two probability distributions, the latter is defined in terms of the solution to the Monge-Kantorovich optimal mass transportation problem.
%
%
%
Interestingly, it relies on a ground distance between points to build a distance between probability distributions \citep{Peyre}. For that reason, the Wasserstein distance stands out from the divergences usually exploited in generative modeling, like the f-divergences \citep{csiszar,nguyen2009}, by its ability to take into account the underlying geometry of the space, capturing the difference between probability distributions even when they have non-overlapping supports. This appealing property has been successfully exploited in Generative Adversarial Networks (GANs) \citep{goodfellow,arjovsky2017,gulrajani2017improved}, as well as in Variational Autoencoders (VAEs) \citep{bousquetetal17}, where the Wasserstein distance can advantageously replace an f-divergence as the loss function. 
Many other applications \citep{courty2017,flamary18,genevay18} rely on the entropic-regularized approximations introduced by \cite{cuturi13}, which has considerably alleviated the inherent computational complexity of the Wasserstein distance in the discrete case, by drawing on the Sinkhorn-Knopp algorithm. A common feature to almost all these works is that the Wasserstein distance is estimated from finite samples. While this problem has long been theoretically studied under the i.i.d. assumption \citep{dudley1969,bassetti2006,weed2019}, it has never been tackled through the lens of robustness to outliers, a crucial issue in Reliable Machine Learning. Indeed, data is nowadays collected at a large scale in unmastered acquisition conditions, and through a large variety of devices and platforms. The resulting datasets often present undesirable influential observations, whether they are errors or rare observations. The presence of corrupted data may heavily damage the quality of estimators, calling for dedicated methods such as JS/TV-GANs \citep{gao2018robust} in the particular case of robust shift-parameter estimation, Robust Divergences in variational inference \citep{futami}, or more general tools from robust statistics \citep{Huber2009}.

The aim of this work is to propose outliers-robust estimators of the Wasserstein distance, and illustrate their application in generative modeling. To that end, we explore how to combine a Median-of-Means approach with Optimal Transport. The Median-of-Means (MoM) is a robust mean estimator firstly introduced in complexity theory during the 1980s \cite{nemirovsky1983problem,jerrum1986random,alon1999space}.
Following the seminal deviation study by \cite{catoni2012challenging}, MoM has lately witnessed a surge of interest, mainly due to its attractive sub-gaussian behavior, under the sole assumption that the underlying distribution has finite variance \citep{devroye2016sub}. Originally devoted to scalar random variables, MoM has notably been extended to random vectors \citep{minsker2015geometric,hsu2016loss,lugosi2017sub} and $U$-statistics \citep{joly2016robust,laforgue2019medians}. As a natural alternative to the empirical mean, MoM has become the cornerstone of several robust learning procedures in heavy-tailed situations, including bandits \citep{bubeck2013bandits} and MoM-tournaments \citep{lugosi2019risk}. A more recent line of work now focuses on MoM's ability to deal with outliers. Aside from concentration results in a contaminated context \cite{depersin2019robust,papier2}, it has yielded promising applications in robust mean embedding \citep{monk}, and the more general MoM-minimization framework \citep{lecue2018robust}.

In this paper, we introduce and study outliers-robust estimators of the Wasserstein distance based on the MoM methodology. Our contribution is threefold:
\begin{itemize}
\item Focusing on the Kantorovich-Rubinstein duality \citep{kantorovich1958}, we present three novel MoM-based estimators, leveraging in particular Medians of $U$-statistics (MoU). In the realistic setting of contaminated data, we show their strong consistency, and provide non-asymptotic bounds as well.
\item We propose a dedicated algorithm to compute these three estimators in practice. Applied on a parametric family of Lipschitz functions, \textit{e.g.} neural networks with clipped weights, it performs a MoM/MoU gradient descent algorithm.  A sensitivity analysis of the unique parameter of these estimators is also provided throught numerical experiments on toy datasets.
\item  We robustify WGANs (w.r.t. outliers) using a MoM-based estimator as loss function. We show the benefits of this approach through convincing numerical results on two contaminated well known benchmarks: CIFAR10 and Fashion MNIST.
\end{itemize}


\section{Background and preliminaries} \label{Back}
Before introducing the problem to be addressed, we recall some key notions about the Wasserstein distance and the Medians-of-Means estimator. Let $\mathcal{X}$ and $\mathcal{Y}$ be subsets of $\mathbb{R}^d$, for some $d \in \mathbb{N}^*$. We denote by $\mathcal{M}^1_+(\mathcal{X})$ the space of all probability measures on $\mathcal{X}$, and consider two distributions $\mu$ and $\nu$ from $\mathcal{M}^1_+(\mathcal{X})$ and $\mathcal{M}^1_+(\mathcal{Y})$. For any $K \in \mathbb{N}^*$, the median of $\{z_1, \ldots, z_K\} \in \mathbb{R}^K$ is denoted by $\underset{1 \le k \le K}{\text{med}}\{z_k\}$.

\subsection{Wasserstein Distance}

Given $p  \in [1, \infty)$, the Wasserstein distance of order $p$ between two arbitrary measures $\mu$ and $\nu$ is defined through the resolution of the Monge-Kantorovitch mass transportation problem \citep{Villani,Peyre}:
\begin{equation}\label{OT-primal}
\wass_p(\mu, \nu)= \underset{ \pi~\in~\bmU(\mu,\nu)}{\min} \left( \int_{\mathcal{X} \times \mathcal{Y}}   \| x - y\|^p d\pi(x\times y) \right)^{1/p}\ ,
\end{equation}
where $\bmU(\mu,\nu)= \{ \pi \in \mathcal{M}^1_+(\mathcal{X} \times \mathcal{Y}): \; \; \int \pi(x,y)dy =\mu(x) ;\int \pi(x,y) dx=\nu(y) \}$  is the set of joint probability distributions with marginals $\mu$ and $\nu$. In the remainder of this paper, we focus on the Wasserstein of order 1, $\wass_1$, omitting the subscript $1$ for notation simplicity.
%
By the dual Kantorovich-Rubinstein formulation \cite{kantorovich1958}, with $\mathcal{B}_L$ the unit ball of the Lipschitz functions space, a useful rewriting of the $1$-Wasserstein distance is:
\begin{equation}\label{OT-dual}
\wass(\mu,\nu)=\underset{\phi \in \mathcal{B}_L}{\sup} \; \; \mathbb{E}_\mu \left[\phi(X)\right]-\mathbb{E}_\nu \left[\phi(Y)\right].
\end{equation}


Of particular interest  is the problem of estimating the Wasserstein distance between $\mu$ and $\nu$ given a finite number of observations.
The usual assumption is to rely upon two samples $\mathbf{X}=\{X_1, \ldots, X_n\}$ and $ \mathbf{Y}=\{Y_1, \ldots, Y_m\}$, composed of i.i.d. realizations drawn respectively from $\mu$ and $\nu$.
The corresponding empirical distributions denoted by $\hmun = (1/n) \sum_{i=1}^n \delta_{X_i}$, and $\hnun= (1/m) \sum_{j=1}^m \delta_{Y_j}$.
The natural questions are then: \textit{how to compute the estimator $\wass(\hmun,\hnun)$, and does it converge towards $\wass(\mu,\nu)$?}
In the dual formulation \eqref{OT-dual}, computing $\wass(\hmun, \hnun)$ is equivalent to replace the expectations with empirical means.
The unit ball of Lipschitz functions can be replaced with a parameterized family of Lipschitz functions, more amenable for learning when $\wass(\hmun, \hnun)$ is used as a loss function, see \textit{e.g.} Wasserstein GANs \cite{arjovsky2017}.
%
%
%
%
From the theoretical side, a substantial number of works have studied the convergence of $\wass(\hmun, \hnun)$ under the i.i.d. setting described above. Statistical rates of convergence of the original OT problem are known to be slow rates with respect to the dimension $d$ of the input space, \textit{i.e.} they are of order $O(n^{-1/d})$ \citep{dudley1969,bassetti2006,weed2019,boissard2011,FG15}.

\subsection{Median-of-Means}\label{s:intro_MoM}

Given an i.i.d. sample $\mathbf{X} = \{ X_1, \ldots, X_n\}$ drawn from $\mu$, the Median-of-Means (MoM) is an estimator of $\mathbb{E}_\mu[X]$ built as follows. First, choose $\KX \le n$, and partition $\{1,\ldots, n \}$ into $\KX$ disjoint blocks $\mathcal{B}_1^{\mathbf{X}}, \ldots, \mathcal{B}_{\KX}^{\mathbf{X}}$ of size $\BX = n / \KX$. If $n$ cannot be divided by $\KX$, some observations may be removed. Then, empirical means are computed on each of the $\KX$ blocks. The estimator returned is finally the median of the empirical means thus computed. For a function $\Phi\colon \mathcal{X} \rightarrow \mathbb{R}$, the MoM estimator of $\mathbb{E}_\mu[\Phi(X)]$ is then formally given by:
\begin{equation}\label{eq:MoM}
\text{MoM}_{\mathbf{X}}[\Phi]=\underset{1\leq k\leq \KX}{\text{med}} \Bigl\{ \frac{1}{\BX}\sum_{i \in \mathcal{B}_k^{\mathbf{X}}} \Phi(X_i) \Bigr\}.
\end{equation}
This estimator provides an attractive alternative to the sample mean $\overline{\Phi}_{\mathbf{X}}= (1/n)\sum_{i=1}^n \Phi(X_i)$ for robust learning. Indeed, it has been shown to (i) exhibit a sub-Gaussian behavior under only a finite variance assumption, making it particularly suited to heavy-tailed distributions, and (ii) be non-sensitive to outliers.
MoM also nicely adapts to multisample $U$-statistics of arbitrary degrees \citep{Lee90}. Indeed, assume that one is interested in estimating $\mathbb{E}_{\mu \otimes \nu}[h(X, Y)]$, for some kernel $h \colon \mathcal{X} \times \mathcal{Y} \rightarrow \mathbb{R}$. Given the samples $\mathbf{X} = \{ X_1, \ldots, X_n\}$ and $\mathbf{Y} = \{ Y_1, \ldots, Y_m\}$, a natural idea then consists in partitioning both $\{1,\ldots, n \}$ and $\{1,\ldots, m \}$ into $\mathcal{B}_1^{\mathbf{X}}, \ldots, \mathcal{B}_{\KX}^{\mathbf{X}}$ and $\mathcal{B}_1^{\mathbf{Y}}, \ldots, \mathcal{B}_{\KY}^{\mathbf{Y}}$ respectively, with $\KY \le m$, and $\BY = m / \KY$. One may then compute $U$-statistics on each pair of blocks $(k, l)$ for $k \le \KX$ and $l \le \KY$, and return the median of the $\KX\times \KY$ $U$-statistics. However, this construction introduces dependence between the base estimators, making the theoretical study more difficult. An alternative then consists in choosing $\KX = \KY = K$, and considering only the diagonal blocks (see Figure \ref{fig:mou_diag}). These two estimators are referred to as (diagonal) Median-of-$U$-statistics (MoU), and using $\mathcal{B}_{k,l}^{\mathbf{XY}}$ to denote the block of tuples $(X_i,Y_j)$ such that $X_i \in \mathcal{B}_k^{\mathbf{X}}$ and $Y_j \in \mathcal{B}_l^{\mathbf{Y}}$, they are formally given by:
%
%
\begin{align*}
\text{MoU}_{\mathbf{XY}}[h] &=\underset{\substack{1\leq k\leq \KX\\1\leq l\leq \KY }}{\text{med}} \Bigl\{ \frac{1}{\BX\BY} \underset{(i,j) \in \mathcal{B}_{k,l}^{\mathbf{XY}} }{\sum} h(X_i, Y_j) \Bigr\},\\[0.2cm]
\text{MoU}^\text{diag}_{\mathbf{XY}}[h] &=\underset{1\leq k\leq K}{\text{med}} \Bigl\{ \frac{1}{\BX\BY} \underset{(i,j) \in \mathcal{B}_{k,k}^{\mathbf{XY}} }{\sum} h(X_i, Y_j) \Bigr\}.
\end{align*}

\section{When Wasserstein meets MoM}
\label{MoM}
In this section, we investigate how MoM estimators can be leveraged to define and analyze new estimators of $\wass(\mu,\nu)$ that exhibit strong theoretical guarantees in presence of outliers.
In order to assess robustness, we place ourselves in the realistic $\mathcal{O} \cup \mathcal{I}$ framework, see \textit{e.g.} \cite{Huber2009,lecue2017robust}, devoted to data contamination.
In this setting, the i.i.d. assumption is relaxed, and  the following assumption is instead adopted.

\begin{assumption}\label{ass:contamination}
Sample $\mathbf{X}$ is polluted with $n_{\bmO} < n/2$ (possibly adversarial) outliers. The remaining $n - n_{\bmO}$ points are informative data, or \emph{inliers}, independently distributed according to $\mu$. A similar assumption is made on $\mathbf{Y}$, which is supposed to contain $m_{\bmO} < m/2$ arbitrary outliers, and $m - m_{\bmO}$ inliers drawn from $\nu$. Inliers are assumed to lie in a compact set $\compact \subset \mathbb{R}^d$. In contrast, no assumption is made on the outliers, that may not be bounded. The proportions of outliers in samples $\mathbf{X}$ and $\mathbf{Y}$ are denoted by $\tau_{\mathbf{X}}= n_{\bmO} / n$  and $\tau_{\mathbf{Y}}=m_{\bmO} / m$ respectively.
\end{assumption}



\subsection{MoM and MoU-based estimators}
\label{sec:estimators}

Starting from the expression of the dual expression \eqref{OT-dual}, we observe that it can be considered with a two-fold perspective. The first one consists in considering the Wasserstein distance as the supremum of the difference between two expected values. The second one, obtained by linearity of the expectation, rather regards $\mathcal{W}(\mu, \nu)$ as the supremum of single expected values, but taken with respect to the tuple $(X, Y)$, and associated to the kernel: $h_\phi\colon (X, Y) \mapsto \phi(X) - \phi(Y)$.

Although quite elementary at first sight, this two-fold perspective gains complexity when applied to the empirical distributions $\hat{\mu}_n$ and $\hnun$.
Indeed, following the first perspective, the natural estimator obtained is the supremum of the differences between two empirical averages, while the second one leads to the supremum of $2$-samples $U$-statistics of degrees $(1, 1)$ and kernels $h_{\phi}$.
So far, both points of view are strictly equivalent by linearity of the expectation and the empirical mean.
However, this equivalence breaks down as soon as non-linearities are introduced, through MoM-like estimators for instance.
We therefore introduce three distinct estimators of $\mathcal{W}(\mu, \nu)$, that differ upon which estimator of Section \ref{s:intro_MoM} is used.

\begin{definition}\label{def:estimators}
We define the Median-of-Means and the Median-of-$U$-statistics estimators of the $1$-Wasserstein distance as follows:
\begin{align*}
\mathcal{W}_\mathrm{MoM}(\hat{\mu}_n, \hat{\nu}_m) &= \sup_{\phi \in \mathcal{B}_L} \{ \mathrm{MoM}_{\mathbf{X}}[\phi] - \mathrm{MoM}_{\mathbf{Y}}[\phi] \},\\[0.2cm]
\mathcal{W}_\mathrm{MoU}(\hat{\mu}_n, \hat{\nu}_m) &= \sup_{\phi \in \mathcal{B}_L} \{ \mathrm{MoU}_{\mathbf{XY}}[h_\phi] \},\\[0.2cm]
\mathcal{W}_\mathrm{MoU-diag}(\hat{\mu}_n, \hat{\nu}_m) &= \sup_{\phi \in \mathcal{B}_L} \{ \mathrm{MoU}^\mathrm{diag}_{\mathbf{XY}}[h_\phi] \}.
\end{align*}
\end{definition}

While $\Wmom$ relies on the difference between individual median blocks, $\Wmou$ considers the median over all possible combinations of blocks between $\mathbf{X}$ and $\mathbf{Y}$.
As an intermediate step, $\Wmoudiag$ looks after diagonal blocks only. 
The latter formulation is used in \cite{monk} to derive robust mean embedding and Maximum Mean Discrepancy estimators.
The theoretical analysis is made simpler by the independence between the blocks, but the estimator suffers from an increased variance due to the important loss of information, see Figure \ref{fig:mou_diag} and \cite{joly2016robust}.
It should be noticed however that $\Wmoudiag$ enjoys a much lower computational cost in practice.

%

One elegant way to combine both benefits, \textit{i.e.} small loss of information and low computational cost, is to consider randomized blocks \cite{laforgue2019medians}.
Instead of partitioning the dataset, this method builds blocks by sampling them independently through simple Sampling Without Rejection (SWoR).
One consequence is the possibility for the randomized blocks to overlap (see $\mathscr{B}^{\mathbf{X}}_1$, $\mathscr{B}^{\mathbf{X}}_2$, $\mathscr{B}^{\mathbf{X}}_3$ in Figure \ref{fig:mom_morm}), making the estimator's concentration analysis more difficult.
Nevertheless, guarantees similar to that of MoM have been established (up to constants), and the extension to $2$-sample $U$-statistics built on randomized blocks allows for a better exploration of the grid than through $\mathrm{MoU}^\text{diag}$, see Figure \ref{fig:mou_rand}.
However, despite the possibility to reach every part of the grid, the exploration scheme illustrated in Figure \ref{fig:mou_rand} have a fixed structure (\textit{e.g.} always $3$ cells per column, $4$ cells per row).
The \emph{totally free} alternative, as depicted in Figure \ref{fig:mou_incomp}, consists in sampling directly from the pairs of observations, which generates incomplete $U$-statistics.
If no theoretical guarantees have been established for this extension due to the complex replication setting between blocks, it still benefits from good empirical results \cite{laforgue2019medians}, consistent with the grid covering it allows.

Another important question to be addressed is: \textit{how to handle the non-differentiability introduced by the median operator?}
Indeed, the Wasserstein distance often acts as a loss function, \textit{e.g.} in generative modeling (VAEs, GANs), and optimizing through a MoM/MoU-based criterion then becomes crucial.
One answer is to uses a MoM-gradient descent algorithm \citep{lecue2018robust}.
It consists in performing a mini-batch gradient step based on the median block.
In order to avoid local minima, authors propose shuffle the partition at each step of the descent, leading to the minimization of an expected MoM loss (w.r.t. the shuffling) that is more stable.
Notice that this method goes beyond random partitions, and easily adapts to the randomized extensions discussed above.


\begin{figure*}[!t]
\centering
\begin{subfigure}{.35\textwidth}
  \centering
  \includegraphics[width=0.65\textwidth]{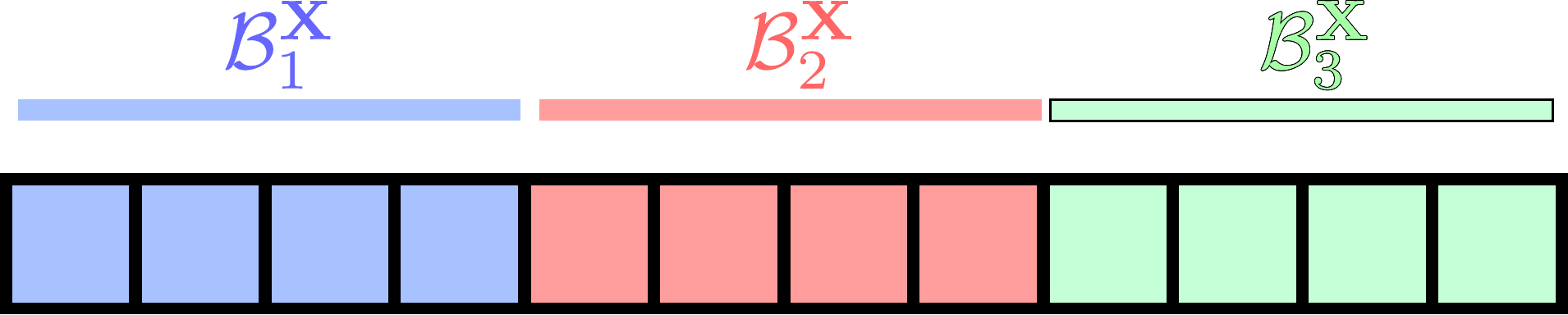}\\[0.3cm]
  \includegraphics[width=0.65\textwidth]{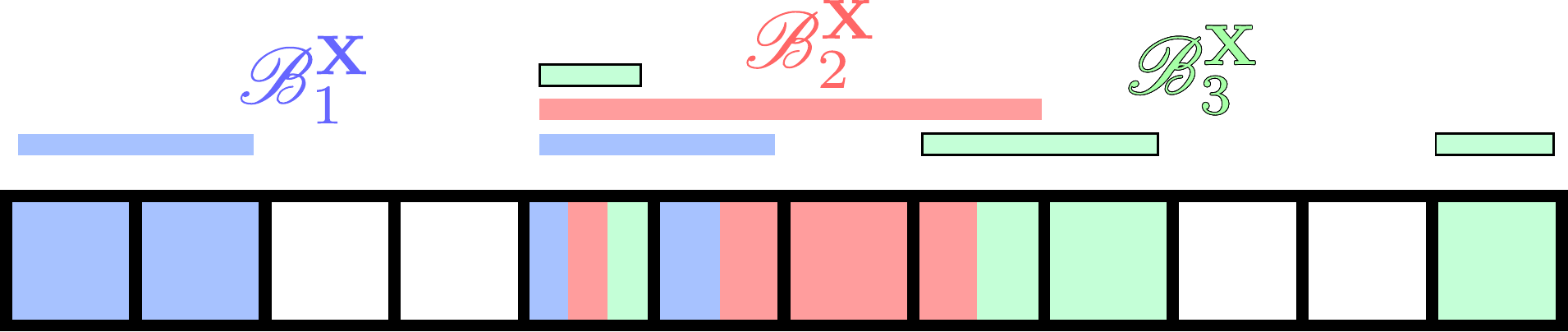}\\[0.4cm]
  \caption{1D standard and randomized MoM}
  \label{fig:mom_morm}
\end{subfigure}
\hfill
\begin{subfigure}{.3\textwidth}
  \centering
  \includegraphics[width=0.65\textwidth]{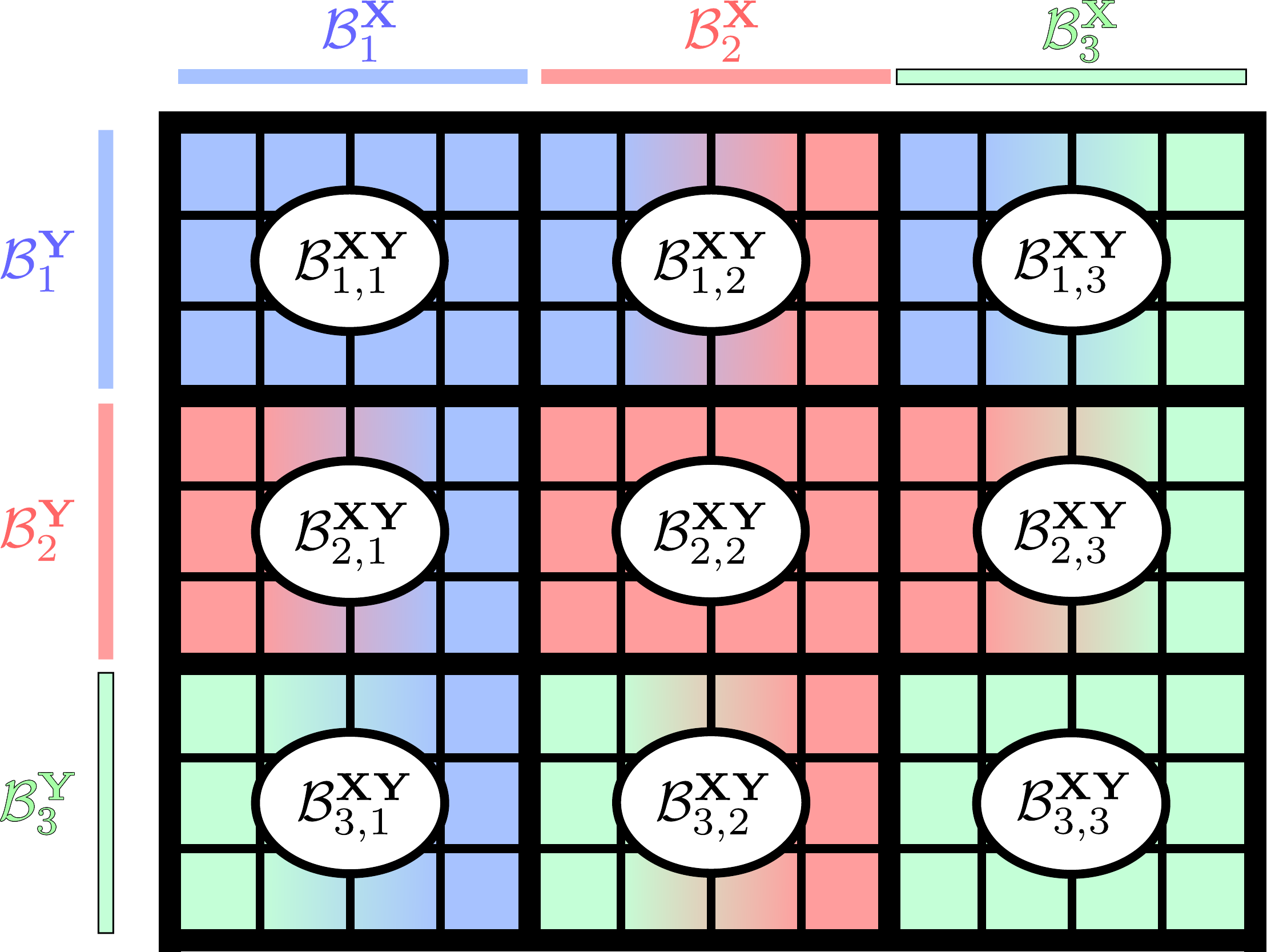}
  \caption{$\mathrm{MoU}_\mathbf{XY}$}
  \label{fig:mou}
\end{subfigure}
\hfill
\begin{subfigure}{.3\textwidth}
  \centering
  \includegraphics[width=0.65\textwidth]{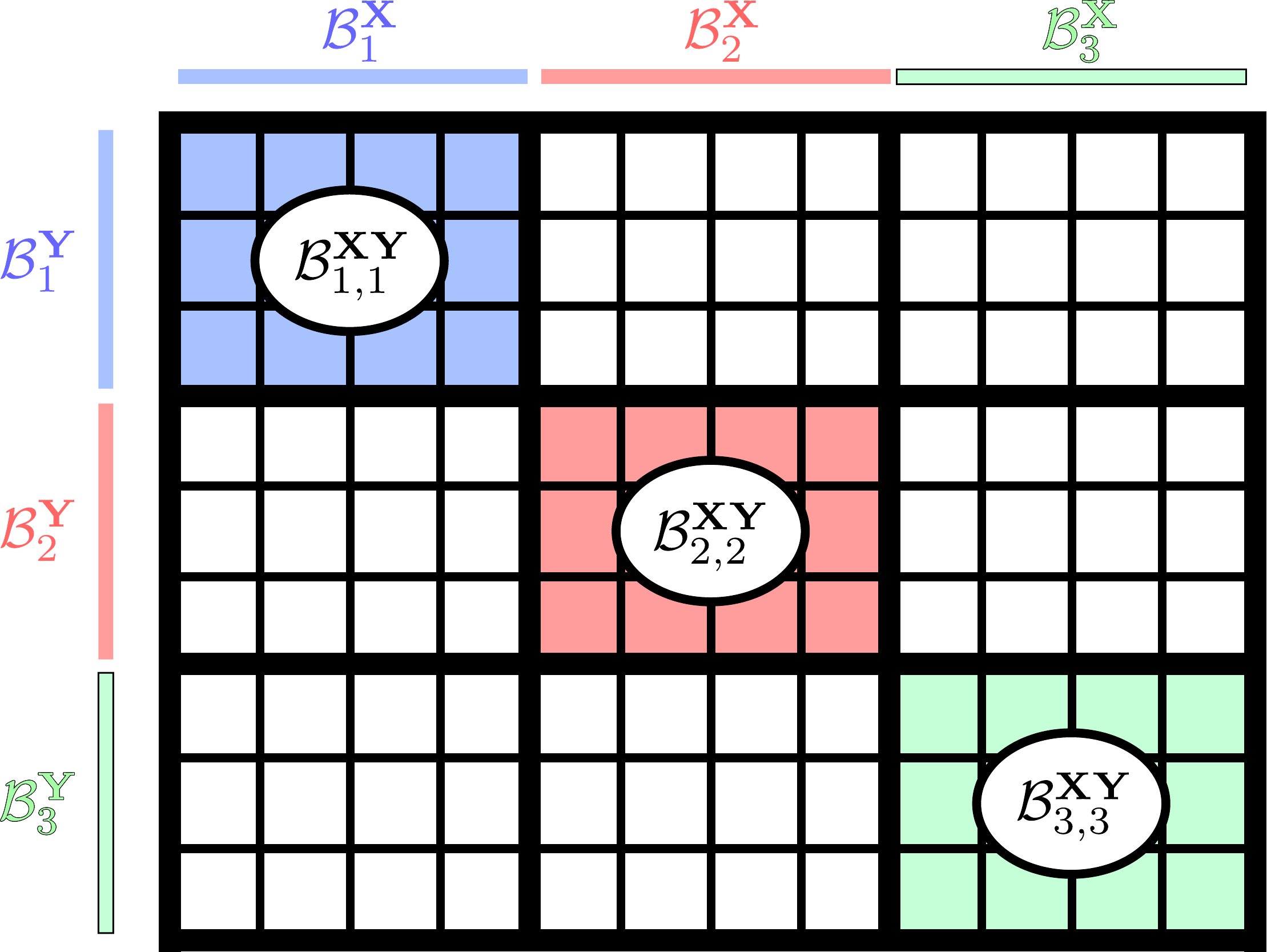}
  \caption{$\mathrm{MoU}^\text{diag}_\mathbf{XY}$}
  \label{fig:mou_diag}
\end{subfigure}\\[0.4cm]
\begin{subfigure}{.35\textwidth}
  \centering
  \vspace{-0.3cm}
  \includegraphics[width=0.55\textwidth]{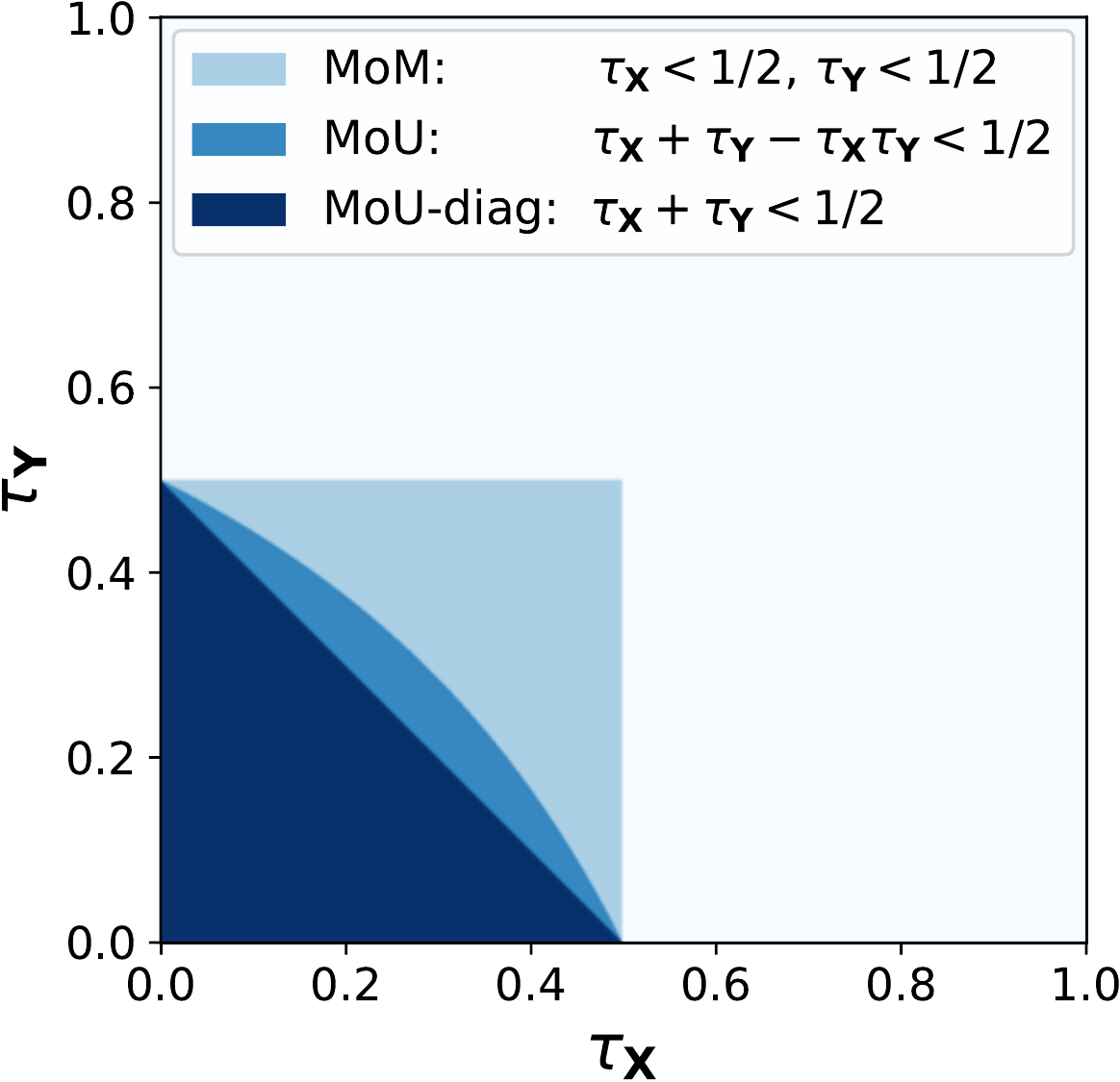}\\[-0.25cm]
  \caption{Admitted proportion of outliers}
  \label{fig:outliers}
\end{subfigure}
\hfill
\begin{subfigure}{.3\textwidth}
  \centering
  \includegraphics[width=0.65\textwidth]{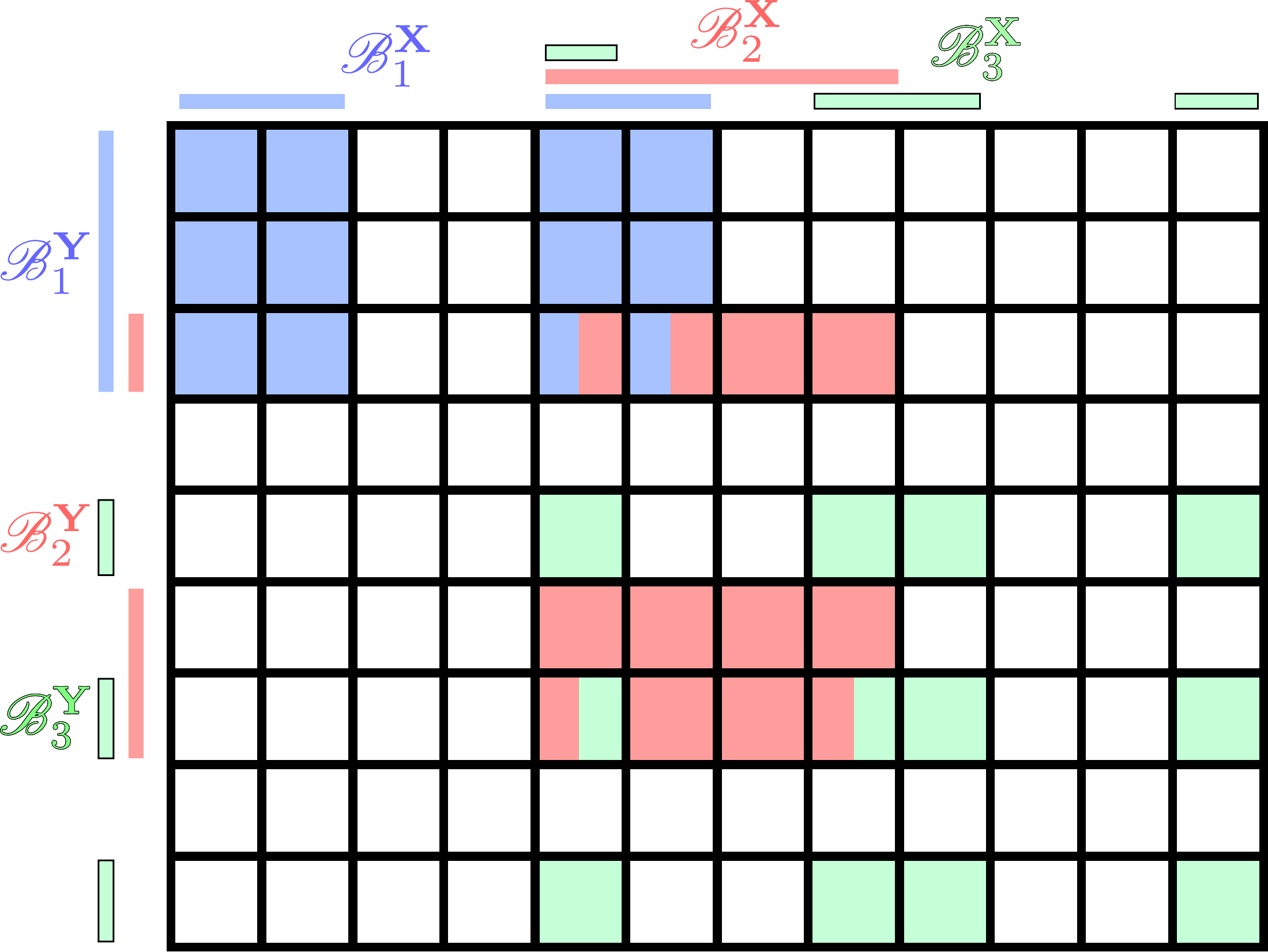}
  \caption{MoU based on randomized blocks}
  \label{fig:mou_rand}
\end{subfigure}
\hfill
\begin{subfigure}{.3\textwidth}
  \centering
  \includegraphics[width=0.65\textwidth]{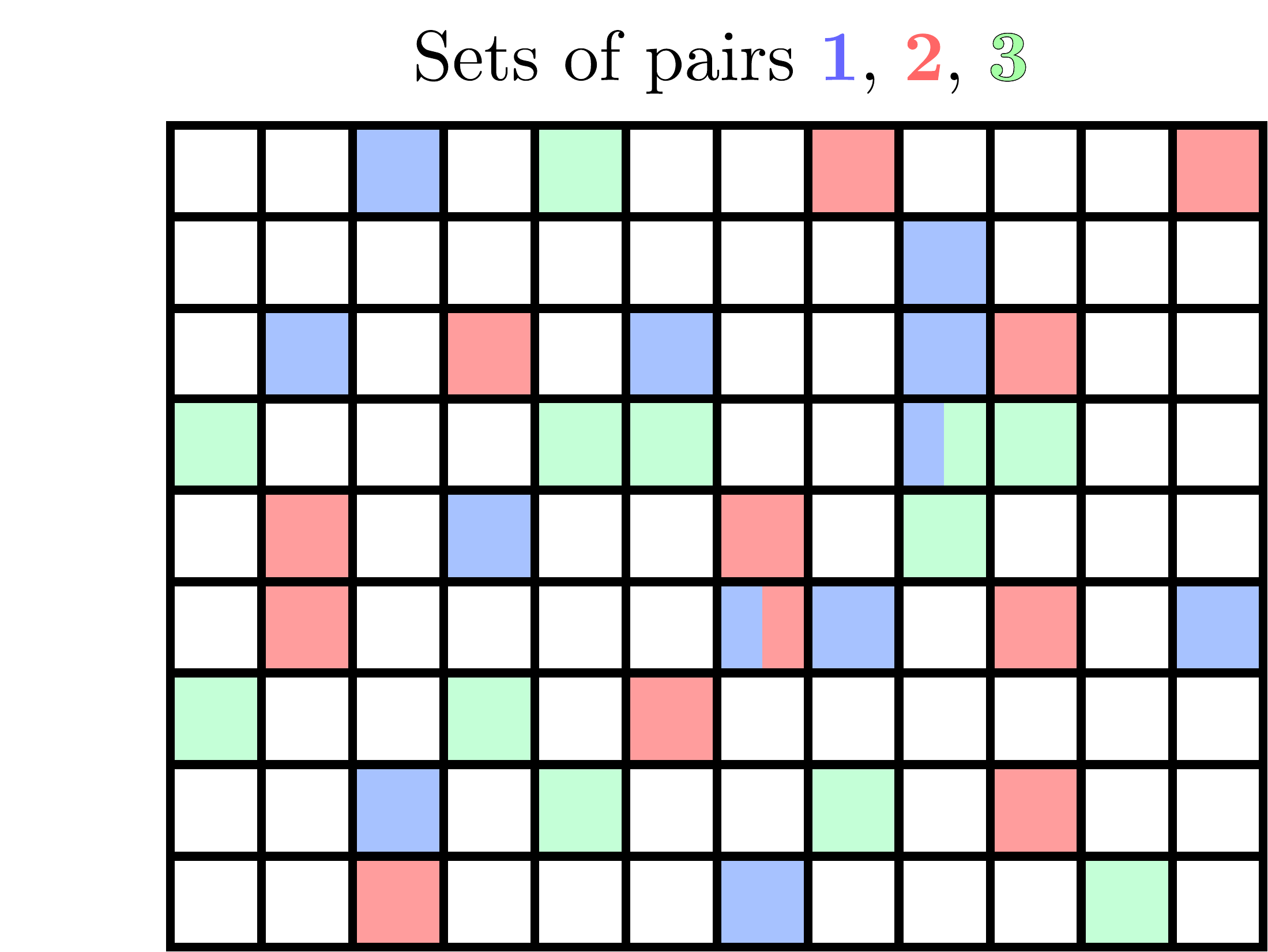}
  \caption{MoU based on randomized pairs}
  \label{fig:mou_incomp}
\end{subfigure}
\caption{Sampling strategies to build MoM and MoU, as well as  admitted proportion of outliers.}
\label{fig:blocks}
\end{figure*}

\subsection{Theoretical guarantees}\label{TG}

We now establish the statistical guarantees satisfied by the estimators introduced in Definition \ref{def:estimators} under Assumption \ref{ass:contamination}.
First notice that if $\hat{\mu}_\text{MoM}$ denotes with a language abuse the \emph{measure} such that for all application $\phi\colon \mathbb{R}^d \rightarrow \mathbb{R}$ it holds $\mathbb{E}_{\hat{\mu}_\text{MoM}}\left[\phi\right] = \text{MoM}_{\mathbf{X}}[\phi]$, it is direct to see that $\mathcal{W}_\text{MoM}(\hat{\mu}_n, \hat{\nu}_m) = \mathcal{W}(\hat{\mu}_\text{MoM}, \hat{\nu}_\text{MoM})$.
Then, it holds $\mathcal{W}_\text{MoM}(\hat{\mu}_n, \hat{\nu}_m) - \mathcal{W}(\mu, \nu) \le \mathcal{W}(\mu, \hat{\mu}_\text{MoM}) + \mathcal{W}(\hat{\nu}_\text{MoM}, \nu)$, and one may only focus on the theoretical guarantees of the right-hand side terms.
Before stating our main results, we need an additional assumption on the numbers of outliers $n_{\mathcal{O}}$ and $m_{\mathcal{O}}$, which are assumed to grow sub-linearly with respect to $n$ and $m$.
\begin{assumption}\label{ass:nO}
There exist $C_{\mathcal{O}} \geq 1$ and $0 \leq \alpha_{\mathcal{O}} <1 $ such that $n_{\mathcal{O}} \le C_{\mathcal{O}}^2~n^{\alpha_{\mathcal{O}}}$ and $m_{\mathcal{O}} \le C_{\mathcal{O}}^2~m^{\alpha_{\mathcal{O}}}$.
\end{assumption}

We start by an asymptotic result establishing the strong consistency of estimators in Definition \ref{def:estimators}.
It highlights the different outlier configurations allowed through conditions on the proportions of outliers $\tau_\mathbf{X}$ and $\tau_\mathbf{Y}$.

\begin{proposition}\label{prop:consistency}
Suppose that samples $\mathbf{X}$ and $\mathbf{Y}$ satisfy Assumptions \ref{ass:contamination} and \ref{ass:nO}.
Then, choosing $\KX = \lceil \sqrt{2\tau_\mathbf{X}}~n \rceil$, it holds:\vspace{-0.1cm}
$$
\mathcal{W}(\hat{\mu}_\mathrm{MoM}, \mu) \overset{a.s}{\longrightarrow} 0.
$$
If moreover $\tilde{\tau} \coloneqq \tau_\mathbf{X} + \tau_\mathbf{Y} - \tau_\mathbf{X}\tau_\mathbf{Y} < 1/2$, then choosing $\KX = \lceil \sqrt{2\tilde{\tau}}~n \rceil$ and $\KY = \lceil \sqrt{2\tilde{\tau}}~m \rceil$, it holds:
$$
\big \vert \mathcal{W}_{\mathrm{MoU}}(\hat{\mu}_n, \hnun) - \mathcal{W}(\mu, \nu) \big \vert \overset{a.s}{\longrightarrow} 0.
$$
If finally $\tau_\mathbf{X} + \tau_\mathbf{Y} < 1/2$ and $n=m$, then choosing $\KX = \KY = \lceil \sqrt{2(\tau_\mathbf{X} + \tau_\mathbf{Y})}~n \rceil$, it holds:
$$
\big \vert \mathcal{W}_{\mathrm{MoU-diag}}(\hat{\mu}_n, \hnun) - \mathcal{W}(\mu, \nu) \big \vert \overset{a.s}{\longrightarrow} 0.
$$
\end{proposition}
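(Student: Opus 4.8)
The plan is to reduce the three consistency claims to a single building block: a strong law of large numbers for the MoM/MoU estimators of $\mathbb{E}_\mu[\phi(X)]$ (resp. $\mathbb{E}_{\mu\otimes\nu}[h_\phi]$) that is \emph{uniform} over $\phi\in\mathcal{B}_L$ restricted to the compact set $\compact$. The key observation, already noted in the excerpt, is that $\mathcal{W}_{\mathrm{MoM}}(\hmun,\hnun)=\mathcal{W}(\hat\mu_{\mathrm{MoM}},\hat\nu_{\mathrm{MoM}})$, so the triangle inequality gives $|\mathcal{W}_{\mathrm{MoM}}(\hmun,\hnun)-\mathcal{W}(\mu,\nu)|\le \mathcal{W}(\mu,\hat\mu_{\mathrm{MoM}})+\mathcal{W}(\hat\nu_{\mathrm{MoM}},\nu)$. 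By duality each of the two right-hand terms equals $\sup_{\phi\in\mathcal{B}_L}\{\mathbb{E}_\mu[\phi(X)]-\mathrm{MoM}_{\X}[\phi]\}$ and its symmetric; since $\phi\in\mathcal{B}_L$ is $1$-Lipschitz and only its values on $\compact$ (plus the outlier locations, which MoM is designed to ignore) matter, it suffices to show $\sup_{\phi\in\mathcal{B}_L\cap C(\compact)}|\mathrm{MoM}_{\X}[\phi]-\mathbb{E}_\mu[\phi(X)]|\overset{a.s.}{\to}0$ with the prescribed choice $\KX=\lceil\sqrt{2\tau_\X}\,n\rceil$. For $\mathcal{W}_{\mathrm{MoU}}$ and $\mathcal{W}_{\mathrm{MoU-diag}}$ one works directly with the $U$-statistic kernels $h_\phi(X,Y)=\phi(X)-\phi(Y)$ and the analogous uniform statement $\sup_\phi|\mathrm{MoU}_{\X\Y}[h_\phi]-\mathcal{W}(\mu,\nu)|$ is not quite right — rather $\sup_\phi|\mathrm{MoU}_{\X\Y}[h_\phi]-(\mathbb{E}_\mu\phi-\mathbb{E}_\nu\phi)|\overset{a.s.}{\to}0$ uniformly, whence the sup passes through.

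The first main step is the \textbf{pointwise} MoM/MoU strong law under contamination. Fix $\phi$. With $\KX$ blocks, a block's empirical mean is ``clean'' (contains no outlier) unless it meets one of the $n_{\bmO}$ outliers; at most $n_{\bmO}$ blocks are polluted, so at least $\KX-n_{\bmO}$ blocks are clean. With $\KX=\lceil\sqrt{2\tau_\X}\,n\rceil$ and $\tau_\X=n_{\bmO}/n$ we get $n_{\bmO}=\tau_\X n \le \KX^2/(2n)\cdot(1+o(1))$... more simply $n_{\bmO}/\KX \le \sqrt{\tau_\X/2}\cdot(1+o(1)) < 1/2$ for $n$ large since $\tau_\X<1/2$ — wait, $\sqrt{\tau_\X/2}<1/2\iff\tau_\X<1/2$, good — so strictly fewer than half the blocks are polluted, and the median lands among clean blocks (or between two clean block values) eventually. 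Each clean block mean is an average of $\BX=n/\KX\to\infty$ i.i.d.\ bounded draws from $\mu$ (bounded since inliers lie in $\compact$ and $\phi$ is Lipschitz on $\compact$), so by a Hoeffding bound plus Borel–Cantelli each clean block mean is within $\varepsilon$ of $\mathbb{E}_\mu[\phi]$ eventually, almost surely; since $\BX\to\infty$ requires $\KX=o(n)$, which holds as $\alpha_{\bmO}<1$ forces $\tau_\X=n_{\bmO}/n\to0$ under Assumption~\ref{ass:nO}, hence $\KX=\lceil\sqrt{2\tau_\X}\,n\rceil=o(n)$. Squeezing the median between extreme clean block values gives $\mathrm{MoM}_\X[\phi]\to\mathbb{E}_\mu[\phi]$ a.s. The MoU case is identical with a two-sample Hoeffding/Hoeffding-decomposition bound for $U$-statistics on clean blocks; for $\mathrm{MoU}_{\X\Y}$ the count of polluted blocks among the $K^2$ grid cells is at most $n_{\bmO}K+m_{\bmO}K-n_{\bmO}m_{\bmO}$ (cells in a polluted row or column), and dividing by $K^2$ and using $K=\lceil\sqrt{2\tilde\tau}\,n\rceil$ with $\tilde\tau=\tau_\X+\tau_\Y-\tau_\X\tau_\Y$ gives a fraction $\to\tilde\tau/2<1/2$; for $\mathrm{MoU-diag}$ with $n=m$ the polluted diagonal blocks number at most $n_{\bmO}+m_{\bmO}$ out of $K$, fraction $\to(\tau_\X+\tau_\Y)/2<1/2$ by the stated condition.

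The second main step, and the \textbf{main obstacle}, is upgrading pointwise a.s.\ convergence to \emph{uniform} a.s.\ convergence over $\mathcal{B}_L\cap C(\compact)$ — this is what lets the supremum defining $\mathcal{W}_\bullet$ commute with the limit. Here one uses that $\compact$ is compact, so $\mathcal{B}_L\cap C(\compact)$ (modulo additive constants, which cancel in $\mathrm{MoM}_\X[\phi]-\mathbb{E}_\mu[\phi]$ and in $h_\phi$) is a totally bounded subset of $C(\compact)$ in sup-norm, by Arzelà–Ascoli. Cover it by finitely many sup-norm $\varepsilon$-balls centered at $\phi_1,\dots,\phi_N$; apply the pointwise result to each $\phi_j$; and control the oscillation: both $\phi\mapsto\mathrm{MoM}_\X[\phi]$ and $\phi\mapsto\mathbb{E}_\mu[\phi]$ are $1$-Lipschitz in sup-norm (median and mean of block-averages are monotone, $1$-Lipschitz averaging operations), so $\|\phi-\phi_j\|_\infty\le\varepsilon$ transfers a $3\varepsilon$ bound uniformly. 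Some care is needed because outliers are unbounded, so one must argue that on the event (eventually a.s.) where the median sits among clean blocks, only $\phi$'s restriction to $\compact$ enters the relevant block averages; this is exactly the robustness mechanism and needs to be made precise in the bookkeeping. Combining the finite union bound over $j=1,\dots,N$ with Borel–Cantelli closes the uniform statement, and then the three displayed limits follow by the triangle-inequality reduction and by passing the sup through the uniform limit. The remaining verifications — that Assumption~\ref{ass:nO} indeed gives $\BX,\BY\to\infty$ and $n_{\bmO}/\KX<1/2$ eventually, and the arithmetic of the polluted-block counts — are routine.
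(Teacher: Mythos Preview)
Your approach is essentially the paper's: reduce to a majority of clean blocks, squeeze the median between clean-block extremes, and then invoke a uniform (Glivenko--Cantelli) law over $\mathcal{B}_L$ restricted to the compact $\compact$. The paper argues the uniformity by directly bounding
\[
\sup_{\phi\in\mathcal{B}_L}\Bigl|\mathrm{MoM}_{\X}[\phi]-\mathbb{E}_\mu[\phi]\Bigr|
~\le~
\sup_{k\in\mathcal{I}_{\X}}~\sup_{\phi\in\mathcal{B}_L}\bigl|\overline{\phi}_{\X,k}-\mathbb{E}_\mu[\phi]\bigr|,
\]
and then applying a Glivenko--Cantelli theorem (via the entropy bound on $\mathcal{B}_L$ over $\compact$) to each clean block, using independence across blocks to multiply the almost-sure events.

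The one place your write-up needs tightening is the oscillation step. You claim $\phi\mapsto\mathrm{MoM}_{\X}[\phi]$ is $1$-Lipschitz in sup-norm and then cover $\mathcal{B}_L$ by Arzel\`a--Ascoli; but Arzel\`a--Ascoli only gives a finite cover in $\|\cdot\|_{\infty,\compact}$, and $\mathrm{MoM}_{\X}[\cdot]$ is \emph{not} $1$-Lipschitz for that restricted norm, precisely because polluted blocks may contain outliers outside $\compact$. Your parenthetical fix (``only $\phi$'s restriction to $\compact$ enters'') is the right instinct, but the clean way to make it rigorous is exactly the paper's move: first pass to $\sup_{k\in\mathcal{I}_{\X}}\sup_\phi|\overline{\phi}_{\X,k}-\mathbb{E}_\mu[\phi]|$, which depends only on inliers in $\compact$, and \emph{then} run the covering/Hoeffding/Borel--Cantelli argument on that quantity. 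Once reordered this way, your sketch and the paper's proof coincide. (Minor arithmetic: your claimed polluted-block fractions ``$\tilde\tau/2$'' and ``$(\tau_\X+\tau_\Y)/2$'' should be $\sqrt{\tilde\tau/2}$-type expressions, but under Assumption~\ref{ass:nO} they tend to $0$ anyway, so the conclusion is unaffected.)
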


The key argument in the proof of Proposition \ref{prop:consistency} consists in converting the convergence of the different estimators into the convergences of blocks containing no outliers.
Numbers of blocks $\KX$ and $\KY$ are chosen such that (i)~such blocks are always in majority, and (ii) their sizes $n/\KX$ and $m/\KY$ go to infinity as $n$ and $m$ go to infinity.
Any other choice of $\KX$ and $\KY$ that satisfies this two conditions also ensures convergence.
If the outliers proportions are unknown, building $\KX$ and $\KY$ from upper bounds of $\tau_\mathbf{X}$ and $\tau_\mathbf{Y}$ thus does not impact Proposition \ref{prop:consistency}.
The conditions on $\KX$ and $\KY$ also constrain the proportions of outliers admitted, as illustrated in Figure \ref{fig:outliers}.
The assumption $n=m$ for $\mathcal{W}_{\mathrm{MoU-diag}}$ is necessary to be able to build a majority of sane blocks.
%
%
Our next proposition now investigates the nonasymptotic behavior of the proposed estimators.


\begin{proposition}\label{prop:appli_dev}
Suppose that samples $\mathbf{X}$ and $\mathbf{Y}$ satisfy Assumption \ref{ass:contamination}, and define $\Gamma\colon \tau \mapsto \sqrt{1 + \sqrt{2\tau}} / \sqrt{1 - 2\tau}$.
Then, for all $\delta \in ]0, \exp(-4n\sqrt{2\tau_{\mathbf{X}}})]$, choosing $\KX=\lceil \sqrt{2\tau_{\mathbf{X}}}~n\rceil$, it holds with probability at least $1 - \delta$:
\begin{equation*}
\mathcal{W}(\hat{\mu}_\mathrm{MoM}, \mu) \le \frac{C_1(\tau_{\mathbf{X}})}{n^{1/(d+2)}} + C_2(\tau_{\mathbf{X}}) \sqrt{\frac{\log(1/\delta)}{n}},
\end{equation*}
with $C_1(\tau) = 2 + C_L C_2(\tau)$, $C_2(\tau) = 4~\mathrm{diam}(\mathcal{K})~\Gamma(\tau)$, and $C_L$ a universal constant depending only on $\mathcal{B}_L$.

If furthermore $\tau_{\mathbf{X}} + \tau_{\mathbf{Y}} < 1/2$ and $n=m$, then for all $\delta \in ]0, \exp(-4n\sqrt{2(\tau_{\mathbf{X}} + \tau_{\mathbf{Y}})})]$, choosing $\KX = \KY = \lceil \sqrt{2(\tau_{\mathbf{X}}+\tau_{\mathbf{Y}})}~n\rceil$, it holds with probability at least $1 - \delta$:
\begin{gather*}
\Big| \mathcal{W}_\mathrm{MoU-diag}(\hat{\mu}_n, \hat{\nu}_m) - \mathcal{W}(\mu, \nu)\Big|\hspace{3cm}\\
\hspace{1cm}\le \frac{2C_1(\tau_{\mathbf{X}} + \tau_{\mathbf{Y}})}{n^{1/(d+2)}} + 2C_2(\tau_{\mathbf{X}} + \tau_{\mathbf{Y}}) \sqrt{\frac{\log(1/\delta)}{n}}.
\end{gather*}
\end{proposition}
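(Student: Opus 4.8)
The plan is to reduce the nonasymptotic bound to a high-probability control of $\mathcal{W}(\mu, \hat{\mu}_{\mathrm{MoM}})$ (and its $\mathbf{Y}$-counterpart), exactly as the consistency argument reduces to blocks containing no outliers. Fix the block count $\KX = \lceil \sqrt{2\tau_{\mathbf{X}}}\, n \rceil$. Since $\tauX = \noX / n$, the number of outlier-corrupted blocks is at most $\noX < \sqrt{\tauX/2}\,\KX$, so strictly more than half the blocks are \emph{sane} (inlier-only). On any sane block $\mathcal{B}_k^{\X}$, the block mean $\frac{1}{\BX}\sum_{i \in \mathcal{B}_k^{\X}} \phi(X_i)$ is an i.i.d. average of $\BX = n/\KX$ bounded terms. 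The key observation is that $\mathrm{MoM}_{\X}[\phi]$ lies between the smallest and largest sane block means: if it exceeded all sane block means, more than half the blocks (the sane majority plus itself) would have a value above that median, a contradiction, and symmetrically from below. Hence, uniformly over $\phi \in \mathcal{B}_L$,
$$
\big|\mathrm{MoM}_{\X}[\phi] - \mathbb{E}_\mu[\phi(X)]\big| \le \max_{k \text{ sane}} \Big| \tfrac{1}{\BX}\textstyle\sum_{i \in \mathcal{B}_k^{\X}} \phi(X_i) - \mathbb{E}_\mu[\phi(X)] \Big| = \max_{k \text{ sane}} \mathcal{W}\big(\hat{\mu}_{\mathcal{B}_k}, \mu\big),
$$
using the dual formula \eqref{OT-dual} to pass from the sup over $\mathcal{B}_L$ to a Wasserstein distance on each block. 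Taking the sup over $\phi$ on the left gives $\mathcal{W}(\hat{\mu}_{\mathrm{MoM}}, \mu) \le \max_{k \text{ sane}} \mathcal{W}(\hat{\mu}_{\mathcal{B}_k}, \mu)$.

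Next I would bound a single $\mathcal{W}(\hat{\mu}_{\mathcal{B}_k}, \mu)$ for a sane block of size $\BX$. Split it as $\mathcal{W}(\hat{\mu}_{\mathcal{B}_k}, \mu) \le \mathbb{E}\,\mathcal{W}(\hat{\mu}_{\mathcal{B}_k}, \mu) + \big(\mathcal{W}(\hat{\mu}_{\mathcal{B}_k}, \mu) - \mathbb{E}\,\mathcal{W}(\hat{\mu}_{\mathcal{B}_k}, \mu)\big)$. For the expectation term, since the inliers live in the compact $\compact$, the standard rate for empirical Wasserstein of i.i.d. bounded samples (cf. \citep{dudley1969,FG15,weed2019}) gives $\mathbb{E}\,\mathcal{W}(\hat{\mu}_{\mathcal{B}_k}, \mu) \lesssim \mathrm{diam}(\compact)\, \BX^{-1/d}$ — it is here that the $n^{-1/(d+2)}$ exponent will arise once $\BX = n/\KX \asymp n / (\sqrt{2\tauX}\,n) $ is substituted; more carefully, the constant $C_1$ packaging $2 + C_L C_2(\tau)$ suggests the argument actually uses a covering-number bound on $\mathcal{B}_L$ restricted to $\compact$ together with a truncation of the function class, which is why the universal constant $C_L$ depending on $\mathcal{B}_L$ appears. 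For the fluctuation term, $\phi \mapsto \frac{1}{\BX}\sum \phi(X_i)$ changing one sample point moves $\mathcal{W}(\hat{\mu}_{\mathcal{B}_k}, \mu)$ by at most $\mathrm{diam}(\compact)/\BX$ (Lipschitz functions on $\compact$ are bounded after centering), so McDiarmid's inequality yields $\mathcal{W}(\hat{\mu}_{\mathcal{B}_k}, \mu) \le \mathbb{E}\,\mathcal{W}(\hat{\mu}_{\mathcal{B}_k}, \mu) + \mathrm{diam}(\compact)\sqrt{\log(1/\delta')/(2\BX)}$ with probability $1 - \delta'$.

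Then I would take a union bound over the (at most $\KX$) sane blocks with $\delta' = \delta/\KX$, giving the $\sqrt{\log(\KX/\delta)/\BX}$ term; substituting $\KX = \lceil\sqrt{2\tauX}\,n\rceil$ turns $\sqrt{1/\BX} = \sqrt{\KX/n}$ into $\sqrt{2\tauX}^{1/2} \cdot n^{-1/2} \cdot(\text{stuff})$ and $\BX^{-1/d}$ into a power of $n$; carefully tracking the $\sqrt{1 + \sqrt{2\tau}}$ (from $\KX \le 1 + \sqrt{2\tau}\,n$) and the $\sqrt{1-2\tau}$ (from the sane-majority margin $\KX - 2\noX \ge (1-\sqrt{2\tau})\KX$ roughly, controlling how far into the order statistics the median can be — this is where $\Gamma(\tau) = \sqrt{1+\sqrt{2\tau}}/\sqrt{1-2\tau}$ is born) recovers $C_2(\tau) = 4\,\mathrm{diam}(\compact)\,\Gamma(\tau)$, and folding the rate-term constant into $C_1 = 2 + C_L C_2(\tau)$. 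The restriction $\delta \le \exp(-4n\sqrt{2\tauX})$ ensures $\sqrt{\log(1/\delta)/n} \ge 2\sqrt{2}\,\sqrt{2\tauX} \ge 2\sqrt{\KX/n}$, i.e. it absorbs the harmless $\log \KX$ overhead of the union bound into the stated $\log(1/\delta)$ at the cost of a constant. This proves the $\mathrm{MoM}$ bound; for $\mathcal{W}_{\mathrm{MoU-diag}}$ I would run the same reduction on \emph{diagonal} blocks, noting a block $\mathcal{B}_{k,k}^{\X\Y}$ is sane iff both $\mathcal{B}_k^{\X}$ and $\mathcal{B}_k^{\Y}$ are, so the number of corrupted diagonal blocks is at most $\noX + \noY < \sqrt{2(\tauX+\tauY)}\,K$, still a minority; the same sandwiching gives $|\mathcal{W}_{\mathrm{MoU-diag}}(\hmun,\hnun) - \mathcal{W}(\mu,\nu)| \le \max_{k \text{ sane}}|\mathcal{W}(\hat{\mu}_{\mathcal{B}_k},\hat{\nu}_{\mathcal{B}_k}) - \mathcal{W}(\mu,\nu)| \le \max_k (\mathcal{W}(\hat{\mu}_{\mathcal{B}_k},\mu) + \mathcal{W}(\hat{\nu}_{\mathcal{B}_k},\nu))$, and applying the per-block bound to each of the two terms with $\tau \leftarrow \tauX + \tauY$ (to match the sane-majority count) produces the factor $2$ out front.

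The main obstacle I anticipate is getting the constants \emph{exactly} right in the last step — specifically, pinning down $\Gamma(\tau)$. The sandwiching lemma only says $\mathrm{MoM}_{\X}[\phi]$ is between extreme sane block means, but for a \emph{tight} deviation one must argue that the median cannot sit at an \emph{extreme} sane block mean unless an atypically large fraction of sane blocks deviated in the same direction; quantifying "how many sane blocks must simultaneously deviate" requires a Chernoff/binomial argument on the count of blocks whose mean exceeds a threshold, and it is the margin $\KX - 2\noX$ in that binomial tail that injects the $1/\sqrt{1-2\tau}$ factor. Threading this through a union bound uniform over the infinite class $\mathcal{B}_L$ — rather than over finitely many blocks — while keeping the bound dimension-optimal is the delicate part; I expect the paper handles it by first reducing to the finitely-many-blocks statement via the $\mathcal{W}(\hat{\mu}_{\mathcal{B}_k},\mu)$ rewriting (which already absorbs the sup over $\phi$), so that the only union bound needed is over blocks, and then invoking an off-the-shelf empirical-Wasserstein concentration result for each block.
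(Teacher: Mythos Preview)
Your reduction to $\max_{k \text{ sane}} \mathcal{W}(\hat{\mu}_{\mathcal{B}_k},\mu)$ is a genuine gap, not a route to the stated bound. With $\KX = \lceil \sqrt{2\tauX}\,n\rceil$ and Proposition~\ref{prop:appli_dev} stated under Assumption~\ref{ass:contamination} alone (no Assumption~\ref{ass:nO}), the block size $\BX = n/\KX \approx 1/\sqrt{2\tauX}$ is a \emph{constant} in $n$. Hence both pieces of your per-block bound --- the rate term $\BX^{-1/d}$ and the McDiarmid fluctuation $\sqrt{\log(\KX/\delta)/\BX}$ --- fail to vanish (the latter actually grows like $\sqrt{\log n}$). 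Sandwiching the median by the extreme sane block means throws away precisely the concentration that makes MoM useful: you pay the worst block, not the typical one. You essentially acknowledge this in your final paragraph, but the main body of the proposal never recovers from it.

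The paper's argument is structurally the reverse of what you guessed. It first invokes an off-the-shelf MoM concentration inequality (Proposition~1 of \cite{papier2}) for a \emph{single fixed} $\psi\in\mathcal{B}_L$: since $\psi$ restricted to $\compact$ is bounded by $\mathrm{diam}(\compact)$ and hence sub-Gaussian, one gets $|\mathrm{MoM}_{\X}[\psi]-\mathbb{E}_\mu[\psi]|\le 4\,\mathrm{diam}(\compact)\,\Gamma(\tauX)\sqrt{\log(1/\delta)/n}$ directly --- this is where $\Gamma(\tau)$ and the restriction $\delta\le e^{-4n\sqrt{2\tauX}}$ come from, not from absorbing a block union bound. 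The uniformity over $\mathcal{B}_L$ is then obtained by a covering argument on the \emph{function class}: the median-stability lemma $|\mathrm{MoM}_{\X}[\phi]-\mathrm{MoM}_{\X}[\psi]|\le\|\phi-\psi\|_\infty$ transfers the bound from $\psi$ to any nearby $\phi$ at cost $2\|\phi-\psi\|_\infty$, a union bound over a $\zeta$-net of $\mathcal{B}_L$ (with $\log\mathcal{N}\le C_L^2\zeta^{-d}$) controls all net points simultaneously, and choosing $\zeta\sim n^{-1/(d+2)}$ balances $2\zeta$ against $C_2(\tauX)\sqrt{C_L^2\zeta^{-d}/n}$ --- this is the origin of the $n^{-1/(d+2)}$ exponent and the constant $C_1=2+C_L C_2$. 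So the union bound is over the covering of $\mathcal{B}_L$, not over blocks; your final speculation had it exactly backwards.
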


The proof derives from concentration results established in \cite{papier2}, combined with a generic chaining argument.
It should be noticed that constant $C_2(\tau_\mathbf{X})$ explodes as $\tauX$ goes to $1/2$, which is expected: the more outliers, the more difficult it is to estimate $\mathcal{W}(\mu, \nu)$.
We also stress that the dependence in $1/\sqrt{1 - 2\tauX}$ is better than the $1/(1 - 2\tauX)^{3/2}$ term exhibited in \cite{monk}.
Integrating the deviation probabilities of Proposition \ref{prop:appli_dev} and using Assumption \ref{ass:nO}, we finally obtain our main theorem, that provides a nonasymptotic control on the expected value of our estimators deviations from $\mathcal{W}(\mu, \nu)$.

\begin{theorem}\label{thm:expectation}
Suppose that samples $\mathbf{X}$ and $\mathbf{Y}$ satisfy Assumptions \ref{ass:contamination} and \ref{ass:nO}, and recall the notation used in Proposition \ref{prop:appli_dev}.
Let $\beta \in [0, 1]$, then for all $n$ such that $n^{\frac{1}{d+2} + \frac{1 - \beta}{2}} \ge C_1(\tauX)/(2C_2(\tauX)(2\tauX)^\frac{1}{4})$, it holds:
\begin{equation*}
\mathbb{E}\left[\mathcal{W}(\hat{\mu}_\mathrm{MoM}, \mu)\right] ~\le~ \frac{\kappa_1(\tau_{\mathbf{X}})}{n^{1/(d+2)}} + \frac{\kappa_2(\tau_{\mathbf{X}})}{n^{(\beta- \alpha_{\mathcal{O}})/2}} + \frac{\kappa_3(\tau_{\mathbf{X}})}{n^{\beta/2}},
\end{equation*}
with $\kappa_1(\tau) = C_1(\tau)$, $\kappa_2(\tau) =2C_{\mathcal{O}}C_2(\tau)(2/\tau)^{1/4}$, and $\kappa_3(\tau) = \sqrt{\pi}C_2(\tau)/2$.

Of course, the above bound only makes sense if $\beta > \alpha_{\mathcal{O}}$.
In particular, if $\alpha_\mathcal{O} \le d / (d+2)$, setting $\beta=1$ gives that for all $n$ such that $n^{\frac{1}{d+2}} \ge C_1(\tauX)/(2C_2(\tauX)(2\tauX)^\frac{1}{4})$, with the notation $\kappa = \kappa_1 + \kappa_2 + \kappa_3$, it holds:
\begin{equation*}
\mathbb{E}\left[\mathcal{W}(\hat{\mu}_\mathrm{MoM}, \mu)\right] ~\le~ \kappa(\tau_{\mathbf{X}})~n^{-1/(d+2)}.
\end{equation*}
If furthermore $\tau_{\mathbf{X}} + \tau_{\mathbf{Y}} < 1/2$ and $n=m$, then for all $n$ s.t. $n^{\frac{1}{d+2}} \ge C_1(\tauX + \tauY)/(2C_2(\tauX + \tauY)(2(\tauX + \tauY))^\frac{1}{4})$, with the notation $\kappa' = 2\kappa_1 + 2\sqrt{2}\kappa_2 + 2\kappa_3$, it holds:
\begin{align*}
\mathbb{E}\big|\mathcal{W}_\mathrm{MoU-diag}(\hat{\mu}_n, \hat{\nu}_m) -& \mathcal{W}(\mu, \nu)\big|\\
&\le~ \kappa'(\tau_{\mathbf{X}} + \tau_{\mathbf{Y}})~n^{-1/(d+2)}.
\end{align*}\vspace{-0.88cm}
\end{theorem}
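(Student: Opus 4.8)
The plan is to obtain the theorem by integrating the high-probability deviation bound of Proposition \ref{prop:appli_dev} over the tail parameter $\delta$, and then invoking the sublinear-growth Assumption \ref{ass:nO} to control the resulting integral. Recall that for a nonnegative random variable $Z$ and a threshold $\delta_0 \in (0,1]$, if $\mathbb{P}(Z > a + b\sqrt{\log(1/\delta)}) \le \delta$ for all $\delta \in (0,\delta_0]$, then $\mathbb{E}[Z] \le (a + b\sqrt{\log(1/\delta_0)}) + \int_0^{\delta_0} b\,\frac{1}{2\sqrt{\log(1/\delta)}}\,\frac{1}{\delta}\,d\delta$-type estimate; more cleanly, one splits $\mathbb{E}[Z] = \int_0^\infty \mathbb{P}(Z > t)\,dt$ into the region $t \le t_0$ (bounded trivially by $t_0$) and $t > t_0$ (where the deviation inequality applies after the change of variables $t = a + b\sqrt{\log(1/\delta)}$), giving a Gaussian-type integral that yields the $\sqrt{\pi}/2$ factor appearing in $\kappa_3$.

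First I would apply Proposition \ref{prop:appli_dev} to $Z = \mathcal{W}(\hat\mu_{\mathrm{MoM}}, \mu)$ with $a = C_1(\tauX) n^{-1/(d+2)}$, $b = C_2(\tauX)/\sqrt{n}$, and the admissible range $\delta \in (0, \delta_0]$ with $\delta_0 = \exp(-4n\sqrt{2\tauX})$, so that $\sqrt{\log(1/\delta_0)} = 2\sqrt{n}(2\tauX)^{1/4}\sqrt{n}^{\,1/2}$—more precisely $\sqrt{\log(1/\delta_0)} = 2 n^{1/2}(2\tauX)^{1/4}$. The contribution from the ``inside the valid range'' part produces the term $b\sqrt{\log(1/\delta_0)} = 2 C_2(\tauX)(2\tauX)^{1/4} = \kappa_2(\tauX)/C_{\mathcal{O}} \cdot C_{\mathcal{O}}$; reconciling this with $\kappa_2(\tau) = 2 C_{\mathcal{O}} C_2(\tau)(2/\tau)^{1/4}$ is where Assumption \ref{ass:nO} enters: one does not fix $n$ but rather, to make the bound uniform, one needs $4n\sqrt{2\tauX} = 4\sqrt{2}\,n_{\mathcal{O}}^{1/2} (n/n_{\mathcal{O}})^{1/2}\cdots$; using $n_{\mathcal{O}} \le C_{\mathcal{O}}^2 n^{\alpha_{\mathcal{O}}}$, i.e. $\tauX = n_{\mathcal{O}}/n \le C_{\mathcal{O}}^2 n^{\alpha_{\mathcal{O}}-1}$, gives $\sqrt{2\tauX}\,n \le \sqrt{2}\,C_{\mathcal{O}} n^{(1+\alpha_{\mathcal{O}})/2}$. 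Introducing the free exponent $\beta \in [0,1]$ amounts to choosing where to truncate the tail integral (at $\delta^\star = \exp(-c\,n^{\beta})$ for a suitable $c$ built from $C_{\mathcal{O}}$), which is only legitimate when $\exp(-c n^\beta) \le \delta_0$, i.e. $n^{\beta} \lesssim n\sqrt{\tauX}$; this is automatic once $\beta \le 1$ and explains the role of $\beta$ and the condition $\beta > \alpha_{\mathcal{O}}$ for the $n^{-(\beta-\alpha_{\mathcal{O}})/2}$ term to decay. Carrying the $dt$ integral over the tail yields $b \cdot \tfrac12\sqrt{\pi}$, producing $\kappa_3(\tau) = \sqrt{\pi}C_2(\tau)/2$, while the bias term $a = C_1(\tauX) n^{-1/(d+2)}$ passes through untouched to give $\kappa_1 = C_1$.

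Next I would specialize: the hypothesis $n^{\frac{1}{d+2}+\frac{1-\beta}{2}} \ge C_1(\tauX)/(2C_2(\tauX)(2\tauX)^{1/4})$ is exactly what guarantees the truncation point $\delta^\star$ lies below $\delta_0$ (equivalently, that the ``first'' term $\kappa_1 n^{-1/(d+2)}$ dominates the contribution one would otherwise lose), so the three-term bound holds. Setting $\beta = 1$ collapses the middle and last terms to order $n^{-(1-\alpha_{\mathcal{O}})/2}$ and $n^{-1/2}$ respectively, and under $\alpha_{\mathcal{O}} \le d/(d+2)$ we have $(1-\alpha_{\mathcal{O}})/2 \ge 1/(d+2)$ and $1/2 \ge 1/(d+2)$, so all three terms are $O(n^{-1/(d+2)})$ and we may factor out $n^{-1/(d+2)}$ with $\kappa = \kappa_1+\kappa_2+\kappa_3$. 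For the MoU-diag statement, I would repeat the argument starting from the second inequality of Proposition \ref{prop:appli_dev} (which already has the extra factors of $2$), noting that the triangle-type decomposition $\mathcal{W}_{\mathrm{MoU\text{-}diag}}(\hat\mu_n,\hat\nu_m) - \mathcal{W}(\mu,\nu)$ is already absorbed into that proposition's constants; the bookkeeping replaces $\tauX$ by $\tauX+\tauY$ and multiplies the Gaussian-integral constants, giving $\kappa' = 2\kappa_1 + 2\sqrt{2}\kappa_2 + 2\kappa_3$ (the $2\sqrt2$ on $\kappa_2$ coming from the fact that $(2(\tauX+\tauY))^{1/4}$ versus the $(2/\tau)^{1/4}$ normalization in $\kappa_2$ shifts a $\sqrt2$).

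The main obstacle I anticipate is the careful tracking of constants when passing from ``$n$ fixed'' in Proposition \ref{prop:appli_dev} to ``uniform in $n$'' via Assumption \ref{ass:nO}: one must verify that $\tauX$ (which shrinks like $n^{\alpha_{\mathcal{O}}-1}$) can be consistently replaced, inside $C_2(\tau) = 4\,\mathrm{diam}(\mathcal{K})\,\Gamma(\tau)$ and inside the admissibility threshold $\exp(-4n\sqrt{2\tauX})$, by quantities expressed through $C_{\mathcal{O}}$ and $\alpha_{\mathcal{O}}$, and that the exponent arithmetic ($\frac{1}{d+2}$ vs. $\frac{\beta-\alpha_{\mathcal{O}}}{2}$ vs. $\frac{\beta}{2}$) genuinely gives the claimed rate only under $\alpha_{\mathcal{O}} \le d/(d+2)$. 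The integration-by-tails step itself is routine once the valid range of $\delta$ is correctly propagated; the delicate point is that the deviation bound is \emph{only} valid for $\delta$ small enough, so the $\int_0^{t_0}$ part cannot be ignored and its size $t_0 = a + b\sqrt{\log(1/\delta_0)}$ is precisely what the stated lower bound on $n$ is engineered to control.
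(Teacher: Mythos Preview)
Your plan—integrate the deviation bound of Proposition~\ref{prop:appli_dev} via $\mathbb{E}[Z]=\int_0^\infty\mathbb{P}(Z>t)\,dt$, split at the threshold imposed by the validity range $\delta\le\delta_0=\exp(-4n\sqrt{2\tauX})$, use Assumption~\ref{ass:nO} to turn $\sqrt{n_{\mathcal O}/n}$ into $n^{(\alpha_{\mathcal O}-1)/2}$, and evaluate the Gaussian tail to recover $\sqrt{\pi}/2$—is exactly the paper's proof.

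Two mechanical points in your narrative are off. First, $\beta$ does not enter by moving the truncation point: the paper simply \emph{weakens} $C_2\sqrt{\log(1/\delta)/n}$ to $C_2\sqrt{\log(1/\delta)/n^\beta}$ (legitimate because $\beta\le1$), keeping the truncation at $\delta_0$; this shifts the threshold to $t_0=2C_2(\tauX)(2\tauX)^{1/4}n^{(1-\beta)/2}$ and the tail integral to $\sqrt{\pi}\,C_2(\tauX)/(2n^{\beta/2})$. Your variant ``truncate at $\delta^\star=\exp(-cn^\beta)$'' would require $\delta^\star\le\delta_0$, i.e.\ $n^\beta\gtrsim n\sqrt{\tauX}$ (you reversed the inequality), which is \emph{not} automatic for $\beta<1$. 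Second, the hypothesis on $n$ is not about truncation either: it is used to \emph{center}. It guarantees $C_1(\tauX)/n^{1/(d+2)}\le C_2(\tauX)\sqrt{\log(1/\delta)/n^\beta}$ for every admissible $\delta$, so that the trivial lower bound $\mathcal{W}(\hat\mu_{\mathrm{MoM}},\mu)\ge0$ supplies the missing lower side and one obtains the two-sided estimate $\bigl|\mathcal{W}(\hat\mu_{\mathrm{MoM}},\mu)-C_1(\tauX)/n^{1/(d+2)}\bigr|\le C_2(\tauX)\sqrt{\log(1/\delta)/n^\beta}$, which is then inverted to $\mathbb{P}(|\cdot|>t)\le e^{-n^\beta t^2/C_2^2}$ for $t\ge t_0$ and integrated.
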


Theorem \ref{thm:expectation} highlights that the estimators proposed in Definition \ref{def:estimators} remarkably resist to the presence of outliers in the training datasets.
The price to pay is a slightly slower rate of order $O(n^{-1/(d+2)})$, that becomes equivalent in high dimension -- the usual setting of Optimal Transport -- to the standard $O(n^{-1/d})$ rate.
Interestingly, the dependence in the outliers growing rate $\alpha_\mathcal{O}$ is made explicit, and is in line with expectations (see below).
Unfortunately, the dependency between the blocks makes the nonasymptotic analysis harder for $\Wmoucb   $ and the computationally cheap randomized extensions discussed in Section \ref{sec:estimators}.
This theoretical challenge is left for future work.
We stress that there is no \emph{median-of-means miracle}.
If the number of blocks allows to cancel the outliers impact, the statistical performance then scales with the block size, \textit{i.e.} as $1/\sqrt{B_\mathbf{X}} = \sqrt{K_{\mathbf{X}}/n}$.
Since $K_{\mathbf{X}}$ is roughly $2 n_{\mathcal{O}}$, this means a $\sqrt{n_{\mathcal{O}}/n}$ rate.
So if one allows $n_{\mathcal{O}}$ to grow proportionally to $n$, the bound becomes vacuous.
To get guarantees improving with $n$, we thus need $n_{\mathcal{O}}$ to scale as $n^{\alpha_{\mathcal{O}}}$, for some $\alpha_{\mathcal{O}} < 1$, and the resulting rate is $n^{(1 - \alpha_{\mathcal{O}})/2}$, as found in Theorem \ref{thm:expectation}.
We finally point out that the condition on $n$ ensures $\mathcal{W}(\hat{\mu}_\mathrm{MoM}, \mu) \ge C_1(\tau_{\mathbf{X}})/n^{1/(d+2)} - C_2(\tau_{\mathbf{X}})\sqrt{\log(1/\delta)/n^\beta}$, as the right hand side is negative while $\mathcal{W}(\hat{\mu}_\mathrm{MoM}, \mu)$ is positive by construction.
A less stringent condition might be derived, using \textit{e.g.} the nature of the functions in $\mathcal{B}_L$.

\begin{remark}\label{rmk:other_IPMs}
The unique property of the Wasserstein distance we used, compared to other Integral Probability Metrics (IPMs) \cite{Bharath2012}, is the way to bound the entropy of the unit ball of Lipschitz functions. The present analysis can thus be extended in a direct fashion to any other IPM that has finite entropy.
\end{remark}

\section{MoM-based estimators in practice}\label{num}


 In this section, we first propose a novel algorithm to approximate the MoM/MoU-based estimators using neural networks and provide an empirical study of its behaviour on two toy datasets. Then, we show how to robustify Wasserstein-GANs and present MoMWGAN, a MoM-based variant of GAN, which is evaluated on two well-known image benchmarks.
\subsection{Approximation algorithm}

As show in Section ~\ref{MoM}, MoM/MoU-based estimation of the Wasserstein distance offers a robust alternative to the classical empirical estimator of $\mathcal{W}$. Indeed, the empirical estimator of $\mathcal{W}$ would not converge towards the target in the $\mathcal{O} \cup \mathcal{I}$ framework. The proposed estimators are consistent and have convergence rates of order $O(n^{-1/(d+2)})$ with the $\mathcal{O} \cup \mathcal{I}$ framework.These convergence rates are similar, when $d$ is not too small, to those of the empirical estimator of $\mathcal{W}$ in a non-contaminated setting. Nevertheless, the question of computing these estimators raises two major difficulties: (i) the optimization over the unit ball of Lipschitz functions is intractable, which is a difficulty common to the approximation of the standard Wasserstein distance, and (ii) the non-differentiability of the median-based loss. The first issue is well known of the practioners of the Wasserstein distance who usually prefer to rely on its primal definition with an entropy-based regularization \cite{cuturi13}. However, learning algorithms devoted to Wasserstein GANs overcome this by weight clipping \citep{arjovsky2017} or gradient penalization \citep{gulrajani2017improved} to impose to the GAN a Lipchitz constraint. Similarly we propose here to limit $\Phi$ to be a neural network with similar constraints on weights to ensure its $M$-Lipschitzianity. This enables to approximate the Wasserstein distance up to a (unknown) multiplicative coefficient $M$.
To overpass (ii), one can adopt MoM/MoU gradient descent. Exploited in the context of robust classification \citep{lecue2018robust}, using MoM/MoU gradient descent has been proved to be equivalent to minimize the expectation over the sampling strategy of blocks of $\Wmom,\Wmou$ and $\Wmoucb$.
Combining these techniques, we design novel algorithms to compute approximations of the proposed estimators: $\tWmom$ (see Algorithm \ref{algo::algo_WMOM}), $\tWmou$ and $\tWmoucb$ (see the Supplementary Material).

\begin{algorithm}[!h]
\caption{Approximation of $\mathcal{W}_{\mathrm{MoM}}(\X,\Y)$. }

\textit{Initialization:} $\eta$, the learning rate. $c$, the clipping parameter.  $w_0$, the initial weights. $\KX, \KY$ the number of blocks for $\X$ and $\Y$.
      \begin{algorithmic}[1]
          \For{$t=0,\ldots,n_{\text{iter}}$}

         \State Sample $\KX$ disjoint blocks $\mathcal{B}_1^{\X}, \ldots,\mathcal{B}_{\KX}^{\X}$ and $\KY$ disjoint blocks $\mathcal{B}_1^{\Y}, \ldots,\mathcal{B}_{\KY}^{\Y}$ from a sampling scheme
         \State Find both median blocks $\mathcal{B}_{med}^{\X}$ and $\mathcal{B}_{med}^{\Y}$

         \State \vspace*{-0.2cm} \small{\begin{align*} G_{w} \leftarrow  & \bigl\lfloor \KX / n \bigr\rfloor\hspace*{-0.25cm}  \underset{i \in \mathcal{B}_{med}^{\X}}{\sum}\hspace*{-0.2cm}  \nabla _{w}  \phi_w(X_i)\hspace*{-0.1cm} -\hspace*{-0.1cm} \bigl\lfloor\KY / m \bigr\rfloor \hspace*{-0.25cm}  \underset{j \in \mathcal{B}_{med}^{\Y}}{\sum} \hspace*{-0.2cm}  \nabla _{w}  \phi_w(Y_j))
         				\end{align*} }
         \State ~7.1~$w \leftarrow w + \eta \times \text{RMSProp}(w, G_w)$
         \State ~7.2~ $w \leftarrow \text{clip}(w, -c,c)$
\EndFor \\
 \textbf{Output}: $w,\; \tWmom, \; \phi_w$.

      \end{algorithmic}
      \label{algo::algo_WMOM}
\end{algorithm}

\subsection{Empirical study}

We denote $I_2$ the identity matrix of dimension $2$ and $\mathbf{v}$, the vector $(v,v)^\top$ with $v \in \mathbb{R}$.
\paragraph{Toy datasets.}
Two simulated datasets in 2D space with different kinds of anomalies are used in the experiments. The random vectors $X_1$ and $X_2$ are chosen to be distributed according a mixture of a standard Gaussian distribution  and an "anomaly" distribution, respectively $\bmA_1$ and $\bmA_2$ defined as follows. $\bmA_1$ is the uniform distribution $\mathcal{U}[\mathbf{-50}, \mathbf{50}]$ that mimics \textit{isolated outliers} while $\bmA_2$  is the standard Cauchy distribution shifted by 25, defined to mimic \textit{aggregate outliers} (see \textit{e.g.} \cite{Chandola}). The random vector $Y $ is Gaussian with $Y\sim \mathcal{N}(\mathbf{5},I_2)$,
Datasets $\bmD_1=(\X_1,\Y)$ and $\bmD_2=(\X_2,\Y)$ contain 500 independent and identical copies of $(X_1,Y)$, $(X_2,Y)$ respectively, with the same proportion of outliers $\tau_X$.
\paragraph{Evaluation metrics.}
The Lipchitz constant $M$ being unknown and highly depending of the clipping parameter choice, it wouldn't be appropriate to compare the true 1-Wasserstein value, equal to $\sqrt{50}$, with $\tWmom, \tWmou$ and $\tWmoucb$. Therefore, we propose to compare $\tWmom$, $\tWmou$ and $\tWmoucb$ to $\tW$, the 1-Wasserstein distance approximated by Algorithm \ref{algo::algo_WMOM}, when MoM is not used, \textit{e.g. } $\KX=\KY=1$, by measuring the absolute error between them.
\paragraph{Influence of $\KX,\KY$.}
The numbers of blocks, $\KX$ and $\KY$, are crucial parameters for computation. They define the trade-off between the robustness of the estimator and computational burden. However the theory does not give enough insights about their value: the necessary assumption for the consistency is only that they should be greater than $2\tau_X n$ (see Section \ref{TG}). An empirical study of the influence of their values on the behavior of the approximations of $\Wmom, \Wmou$ and $\Wmoucb$ is therefore much useful. For sake of simplicity, we set $\KX=\KY$ in the subsequent experiments.

In a first experiment, we explore the ability of \cref{algo::algo_WMOM} and variants described in the supplements to override outliers according to the values of $\KX$ and with different rates of outliers $\tauX$.  The approximations $\tWmom$,$\tWmou$ and $\tWmoucb$ are computed using a simple multi-layer perceptrons with one hidden layer and MoM gradient descent over several $\tauX$ and $\KX$ on both datasets. The experiment is repeated 20 times with various seeds. Mean results are displayed.  Figure ~\ref{fig:error} represents absolute deviations between the 1-Wasserstein distance approximated with a MLP when $\tauX=0$ and $\tWmou$ with various anomalies settings and different values of $K_X$. The reader is invited to refer to Section B of the supplements to see similar results for  $\tWmoucb$ and $\tWmom$). Shaded areas, in Figure ~\ref{fig:error} represent 25\%-75\% quantiles over the 20 repetitions. On both datasets, we observe that the approximation algorithm succeeds to provide an estimation of $\Wmou$, able to override outliers with different $\tau_{\X}$ while $\KX$ is high enough. From Section \ref{TG}, we know that $\KX$ needs to be higher than $2 \tauX n$ to have theoretical guarantees. Experiments show that in practice, this condition is not necessary in every situations. For example, when $\tau_{\X}=0.1$ (\textit{i.e.} 50 anomalies) in Figure ~\ref{fig:error} (left), only 70 blocks are needed to override outliers. The reason is that hypothesis makes things work in the worst case, \textit{i.e.}, when each outlier is isolated in one block which lead to have  $\tauX n$ contaminated blocks. This is rarely the case in practice, several blocks can be contaminated by  many outliers  and this is why fewer blocks are needed.
 \begin{figure}[!h]
\includegraphics[height=.13\textheight, trim=1cm .0cm 0cm 0cm,clip=true]{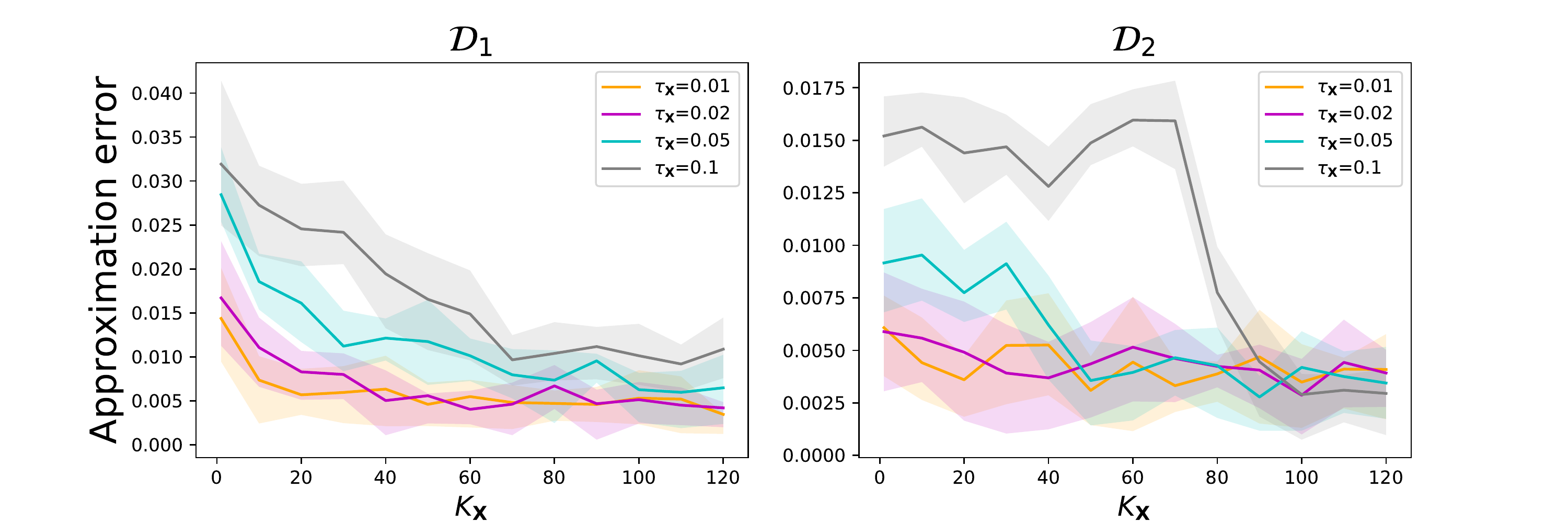}

\caption{$\tWmoudiag$  over $\KX$ for different  anomalies proportion $\tau_{\X}$ on $\mathcal{D}_1$ (left) and $\mathcal{D}_2$ (right).}
\label{fig:error}
\end{figure}

In a second experiment illustrated by Figure ~\ref{fig:CV_K}, we study the convergence of the approximation algorithm with and without anomalies for different values of $\KX$ on $\bmD_1$. To get a fair comparison between the different settings of the algorithm, we compare the predicted values across the "learning" epoch. Here during one epoch, the algorithm has made a gradient pass over the whole dataset, which means that one epoch corresponds one iteration of the approximation algorithm if $\KX=1$ (no MoM estimation),  and to  $\KX$ iterations, in the other cases. In both cases (with or without anomalies), the higher $\KX$ is, the faster the approximation algorithm converges. Without surprise, the MoM approach benefits from the same properties than a mini-batch approach. When there is no anomalies, the distance values reached after convergence are close to the "true" value (obtained with the plain estimator when $\KX=1$), especially when $\KX$ is lower. This means that the MoM-based algorithm can be used routinely instead of the plain estimator. With 5\% of anomalies,  one can see that distance values reached after convergence get closer to the target as $\KX$ grows.

 \begin{figure}[!h]
\includegraphics[height=.13\textheight, trim=1cm .0cm 0cm 0cm,clip=true]{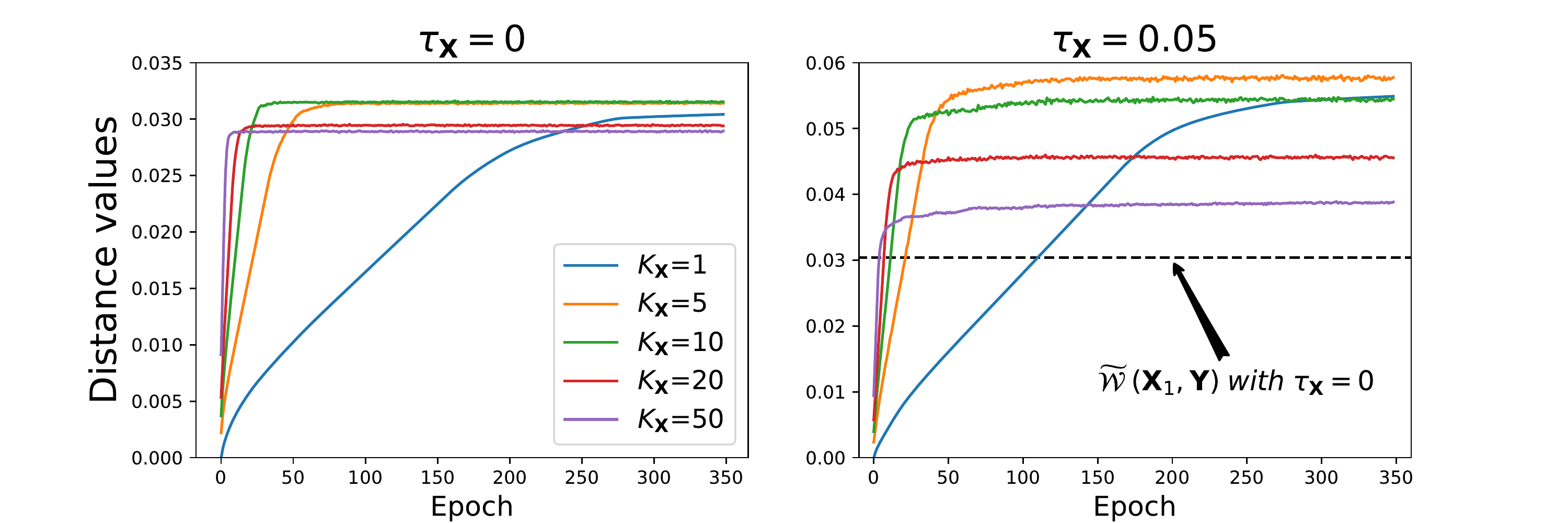}
\caption{Convergence of $\tWmoudiag$ without anomalies (left) and with 5\% anomalies (right) for different $\KX$.}
\label{fig:CV_K}
\end{figure}

\subsection{Application to robust Wasserstein GANs}\label{MoMalgo}

In this part, we introduce a robust modification of WGANs, named MoMWGAN, using one of the three proposed estimators in Section \ref{MoM}.
\begin{figure*}[!h]
\begin{center}
\setlength{\tabcolsep}{2em} 
{\renewcommand{\arraystretch}{2}
\begin{tabular}{cccc}
\multicolumn{2}{c}{WGAN}&  \multicolumn{2}{c}{MoMWGAN}  \\
\includegraphics[trim=2cm 0 .5cm 0, scale=0.35]{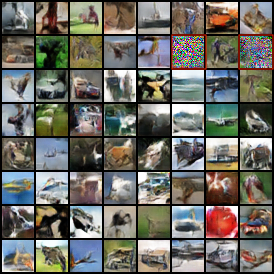}&
 \includegraphics[trim=3.2cm 0 .5cm 0, scale=0.35]{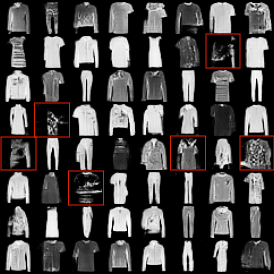}
&
\includegraphics[trim=3.2cm 0 .5cm 0,scale=0.35]{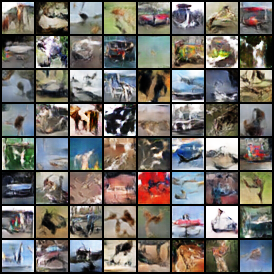}&
\includegraphics[trim=3.2cm 0 .5cm 0,scale=0.35]{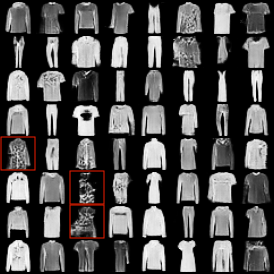}
\\
\end{tabular}}
\end{center}
\caption{Generated samples from trained WGAN and MoMWGAN on CIFAR10 and Fashion MNIST datasets.}\label{images}
\end{figure*}
The behaviour of likelihood-free generative modeling such as Generative Adversarial Networks in the presence of outliers, \textit{i.e.}, with heavy-tails distributions  or contaminated data, has been poorly investigated up to very recently. At our knowledge, the unique reference is \citep{gao2018robust}.
In particular, \citet{gao2018robust} have studied theoretically and empirically the robustness of f-GAN in the special case of  mean estimation for elliptical distributions. In contrast, we illustrate here the theoretical results of \cref{MoM} by applying a MoM approach to robustify WassersteinGAN and show on two real-world image benchmarks how this new variant of GAN behaves when learned with contaminated data.

\noindent \textbf{Reminder on GAN:}
Let us briefly recall the GAN principle. A GAN learns a function $g_{\theta}:\mathcal{Z} \rightarrow \mathcal{X}$ such that samples generate by $g_{\theta}(z) \sim P_{\theta}$, taking as input a sample $z$ (from some reference measure $\xi$, often Gaussian) in a latent space $ \mathcal{Z}$, are close to those of the true distribution $P_r$ of data. 
Wasserstein GANs \citep{arjovsky2017,gulrajani2017improved} use the 1-Wasserstein Distance under its Kantorovich-Rubinstein dual formula as the loss function. Instead of maximizing over the unit ball of Lipschitz functions, one uses a parametric family of M-Lipschitz functions under the form of neural net with clipped weights $w$ \citep{arjovsky2017}. Following up the theoretical analysis of Section \ref{MoM}, we introduce a MoM-based WGAN (MoMWGAN) model, combining the $\Wmom$ estimator studied in \ref{MoM} and WGAN's framework.
Following the weight clipping approach, MoMWGAN boils down to the problem:
\begin{equation*}\label{eq:MoMWGAN-loss}
\underset{ \theta }{\min } \; \;\underset{w}{\max } \; \;  \Bigl\{ \text{MoM}_{\X} [f_w] - \frac{1}{m} \sum_{j=1}^{m} f_w(g_{\theta}(Z_j)), ~ k\leq \KX\Bigr\}
\end{equation*}
Note that the MoM procedure is chosen to be only applied on the observed contaminated sample. It is not clear in which way the sample drawn from the currently learned density is polluted and thus defining the number of blocks would be an issue.
Optimization in WGAN is usually performed by taking mini-batches to reduce the computational load. In the same spirit, we apply MoM inside contaminated mini-batches as described in Algorithm \ref{MWGAN}. To get the outliers-robust property observed in the numerical experiments, we pay the price of finding the median block at each step by evaluating the loss which significantly increases the computational complexity.

\begin{algorithm}[H]
\caption{MoMWGAN}

\textit{Initialization:} $\eta$, the learning rate. $c$, the clipping parameter. $b$, the batch size. $n_c$, the number of critic iterations per generator iteration, $\KX$ the number of blocks. $w_0,\theta_0$ the initial critic/generator's parameters.
      \begin{algorithmic}[1]
         \While{$\theta$ has not converged}
          \For{$t=0,\ldots,n_c$}
        \State Sample $\{X_i \}_{i=1}^b \sim P_r$ to get $\X_t$ and sample $\{z_i \}_{i=1}^b \sim \xi$ to get $\mathbf{Z}_t$

         \vspace*{0.1cm}
         \State Updating $w$ with step 2-6 of Algorithm \ref{algo::algo_WMOM} with $\X=\X_t$ and $\Y=g_{\theta}(\mathbf{Z}_t)$
\EndFor
              \vspace*{0.1cm}
   \State       Sample $\{ Z_j \}_{j=1}^b \sim \xi$
       \vspace*{0.1cm}
       \State ~$g_{\theta} \leftarrow - \nabla_{\theta} \frac{1}{b} \sum_{j=1}^{b} f_w(g_{\theta}(Z_j))$
\State ~$\theta \leftarrow \theta - \eta \times \text{RMSProp}(\theta, g_{\theta})$
            \vspace*{0.1cm}
         \EndWhile

      \end{algorithmic}
      \label{MWGAN}
\end{algorithm}

\paragraph{Numerical experiments}
To test the robustness of MoMWGAN we contaminated two well-known image datasets, CIFAR10 and Fashion MNIST, with two anomalies settings.  \textit{Noise} based-anomalies are added to CIFAR10, \textit{i.e.}, images with random intensity pixels drawn from a uniform law. For Fashion MNIST, the five first classes are considered as "informative data" while the sixth (Sandal) contains the anomalies. In both settings,
WGAN and MoMWGAN are trained on the training samples contaminated in a uniform fashion with a proportion of 1.5\% of outliers in both datasets. Both models use standard parameters of WGAN. $\KX=4$ blocks have been used by MoMWGAN in both experiments. To assess performance of the resulting GANs, we generated 50000 generated images using each model (WGAN and MoMGAN) and measured the Fr\'echet Inception Distance (FID) \citep{Heusel} between the generated examples  in both cases and the (real) test sample.
  Table 1 shows that MoMWGAN improves upon WGAN in terms of outliers-robustness. Furthermore, some generated images are represented in Figure \ref{images}. One can see that outliers do not affect MoMWGAN generated samples while WGAN reproduce noise on contaminated CIFAR10 dataset. For Fashion MNIST, one may see that fewer images are degraded with MoMWGAN generator.

\setlength{\tabcolsep}{0.5em} 
{\renewcommand{\arraystretch}{1}
\begin{tabular}{ccc}
&WGAN& MoMWGAN\\
\hline
Polluted CIFAR10 & 57 & 55.9 \\
Polluted Fashion MNIST & 13.8 & 13.2 \\ \hline
\end{tabular}}
\captionof{table}{FID on polluted datasets.}

\section{Conclusion and perspectives} \label{conclusion}

In this paper, we have introduced three robust estimators of the Wasserstein distance based on MoM methodology.
We have shown asymptotic and non-asymptotic results in the context of polluted data, \textit{i.e.} the $\mathcal{O}\cup \mathcal{I}$ framework. Surpassing computational issues, we have designed an algorithm to compute, in a efficient way, these estimators. Numerical experiments have highlighted the behavior of these estimators over their unique tuning parameter.
Finally, we proposed to robustify WGANs using one of the introduced estimators and have shown  its benefits  on convincing numerical results.
The theoretically well-founded MoM approaches to robustify the Wasserstein distance open the door to numerous applications beyond WGAN, including variational generative modeling. The promising MoMGAN deserves more attention and future work will concern the analysis of the estimator it provides.


\section*{ Acknowlegments}

The authors thank Pierre Colombo for his helpful remarks. This work has been funded by BPI France in the context of the PSPC Project Expresso (2017-2021).

\bibliographystyle{unsrtnat}
\bibliography{Robust_Wasserstein}

\begin{thebibliography}{45}
\providecommand{\natexlab}[1]{#1}
\providecommand{\url}[1]{\texttt{#1}}
\expandafter\ifx\csname urlstyle\endcsname\relax
  \providecommand{\doi}[1]{doi: #1}\else
  \providecommand{\doi}{doi: \begingroup \urlstyle{rm}\Url}\fi

\bibitem[Villani(2003)]{Villani}
Cedric Villani.
\newblock \emph{Topics in Optimal Transportation}.
\newblock Graduate Studies in Mathematics Series. American Mathematical
  Society, New York, 2003.

\bibitem[Santambrogio(2015)]{Santambrogio}
Filippo Santambrogio.
\newblock \emph{Optimal Transport for Applied Mathematicians}.
\newblock Birkhauser, 2015.

\bibitem[Peyré and Cuturi(2019)]{Peyre}
Gabriel Peyré and Marco Cuturi.
\newblock Computational optimal transport.
\newblock \emph{Foundations and Trends® in Machine Learning}, 11\penalty0
  (5-6):\penalty0 355--607, 2019.
\newblock URL \url{http://dx.doi.org/10.1561/2200000073}.

\bibitem[Csiszàr(1963)]{csiszar}
I.~Csiszàr.
\newblock Eine informationstheoretische ungleichung und ihre anwendung auf den
  bewis der ergodizität von markhoffschen kette.
\newblock \emph{Magyer Tud. Akad. Mat. Kutato Int. Koezl}, 8:\penalty0 85--108,
  1963.

\bibitem[Nguyen et~al.(2009)Nguyen, Wainwright, and Jordan]{nguyen2009}
XuanLong Nguyen, Martin~J. Wainwright, and Michael~I. Jordan.
\newblock On surrogate loss functions and f -divergences.
\newblock \emph{Ann. Statist.}, 37\penalty0 (2):\penalty0 876--904, 04 2009.
\newblock \doi{10.1214/08-AOS595}.
\newblock URL \url{https://doi.org/10.1214/08-AOS595}.

\bibitem[Goodfellow et~al.(2014)Goodfellow, Pouget-Abadie, Mirza, Xu,
  Warde-Farley, Ozair, Courville, and Bengio]{goodfellow}
Ian Goodfellow, Jean Pouget-Abadie, Mehdi Mirza, Bing Xu, David Warde-Farley,
  Sherjil Ozair, Aaron Courville, and Yoshua Bengio.
\newblock Generative adversarial nets.
\newblock In \emph{Advances in Neural Information Processing Systems (NeurIPS
  2014)}, 2014.

\bibitem[Arjovsky et~al.(2017)Arjovsky, Chintala, and Bottou]{arjovsky2017}
Martin Arjovsky, Soumith Chintala, and Léon Bottou.
\newblock Wasserstein gan, 2017.

\bibitem[Gulrajani et~al.(2017)Gulrajani, Ahmed, Arjovsky, Dumoulin, and
  Courville]{gulrajani2017improved}
Ishaan Gulrajani, Faruk Ahmed, Martin Arjovsky, Vincent Dumoulin, and Aaron
  Courville.
\newblock Improved training of wasserstein gans, 2017.

\bibitem[Bousquet et~al.(2017)Bousquet, Gelly, Tolstikhin, Simon-Gabriel, and
  Sch{\"o}lkopf]{bousquetetal17}
Olivier Bousquet, Sylvain Gelly, Ilya Tolstikhin, Carl-Johann Simon-Gabriel,
  and Bernhard Sch{\"o}lkopf.
\newblock From optimal transport to generative modeling: the vegan cookbook.
\newblock \emph{arXiv preprint arXiv:1705.07642}, 2017.

\bibitem[{Courty} et~al.(2017){Courty}, {Flamary}, {Tuia}, and
  {Rakotomamonjy}]{courty2017}
N.~{Courty}, R.~{Flamary}, D.~{Tuia}, and A.~{Rakotomamonjy}.
\newblock Optimal transport for domain adaptation.
\newblock \emph{IEEE Transactions on Pattern Analysis and Machine
  Intelligence}, 39\penalty0 (9):\penalty0 1853--1865, 2017.

\bibitem[Flamary et~al.(2018)Flamary, Cuturi, Courty, and
  Rakotomamonjy]{flamary18}
R{\'{e}}mi Flamary, Marco Cuturi, Nicolas Courty, and Alain Rakotomamonjy.
\newblock Wasserstein discriminant analysis.
\newblock \emph{Mach. Learn.}, 107\penalty0 (12):\penalty0 1923--1945, 2018.

\bibitem[Genevay et~al.(2018)Genevay, Peyre, and Cuturi]{genevay18}
Aude Genevay, Gabriel Peyre, and Marco Cuturi.
\newblock Learning generative models with sinkhorn divergences.
\newblock In \emph{Proceedings of the 21st International Conference on
  Artificial Intelligence and Statistics (AISTATS 2018)}, 2018.

\bibitem[Cuturi et~al.(2013)Cuturi, Teboul, and Vert]{cuturi13}
Marco Cuturi, Olivier Teboul, and Jean-Philippe Vert.
\newblock Sinkhorn distances: Lightspeed computation of optimal transportation.
\newblock In \emph{Advances in Neural Information Processing Systems (NeurIPS
  2013)}, 2013.

\bibitem[Dudley(1969)]{dudley1969}
R.~M. Dudley.
\newblock The speed of mean glivenko-cantelli convergence.
\newblock \emph{Ann. Math. Statist.}, 40\penalty0 (1):\penalty0 40--50, 02
  1969.

\bibitem[Bassetti et~al.(2006)Bassetti, Bodini, and Regazzini]{bassetti2006}
Federico Bassetti, Antonella Bodini, and Eugenio Regazzini.
\newblock On minimum kantorovich distance estimators.
\newblock \emph{Statistics and Probability Letters}, 76:\penalty0 1298--1302,
  07 2006.

\bibitem[Weed and Bach(2019)]{weed2019}
Jonathan Weed and Francis Bach.
\newblock Sharp asymptotic and finite-sample rates of convergence of empirical
  measures in wasserstein distance.
\newblock \emph{Bernoulli}, 25\penalty0 (4A):\penalty0 2620--2648, 11 2019.

\bibitem[Gao et~al.(2018)Gao, Liu, Yao, and Zhu]{gao2018robust}
Chao Gao, Jiyi Liu, Yuan Yao, and Weizhi Zhu.
\newblock Robust estimation and generative adversarial nets, 2018.

\bibitem[Futami et~al.(2018)Futami, Sato, and Sugiyama]{futami}
Futoshi Futami, Issei Sato, and Masashi Sugiyama.
\newblock Variational inference based on robust divergences.
\newblock In \emph{Proceedings of the 21st International Conference on
  Artificial Intelligence and Statistics (AISTATS 2018).}, 2018.

\bibitem[Huber and Ronchetti(2009)]{Huber2009}
Peter~J. Huber and Elvezio~M. Ronchetti.
\newblock \emph{Robust Statistics (Second Edition)}.
\newblock John Wiley \& Sons, Inc., Hoboken, New Jersey, 2009.

\bibitem[Nemirovsky and Yudin(1983)]{nemirovsky1983problem}
Arkadii~Semenovich Nemirovsky and David~Borisovich Yudin.
\newblock \emph{Problem Complexity and Method Efficiency in Optimization}.
\newblock John Wiley \& Sons Ltd, 1983.

\bibitem[Jerrum et~al.(1986)Jerrum, Valiant, and Vazirani]{jerrum1986random}
Mark~R Jerrum, Leslie~G Valiant, and Vijay~V Vazirani.
\newblock Random generation of combinatorial structures from a uniform
  distribution.
\newblock \emph{Theoretical Computer Science}, 43:\penalty0 169--188, 1986.

\bibitem[Alon et~al.(1999)Alon, Matias, and Szegedy]{alon1999space}
Noga Alon, Yossi Matias, and Mario Szegedy.
\newblock The space complexity of approximating the frequency moments.
\newblock \emph{Journal of Computer and system sciences}, 58\penalty0
  (1):\penalty0 137--147, 1999.

\bibitem[Catoni(2012)]{catoni2012challenging}
Olivier Catoni.
\newblock Challenging the empirical mean and empirical variance: a deviation
  study.
\newblock In \emph{Annales de l'Institut Henri Poincar{\'e}, Probabilit{\'e}s
  et Statistiques}, volume~48, pages 1148--1185. Institut Henri Poincar{\'e},
  2012.

\bibitem[Devroye et~al.(2016)Devroye, Lerasle, Lugosi, Oliveira,
  et~al.]{devroye2016sub}
Luc Devroye, Matthieu Lerasle, Gabor Lugosi, Roberto~I Oliveira, et~al.
\newblock Sub-gaussian mean estimators.
\newblock \emph{The Annals of Statistics}, 44\penalty0 (6):\penalty0
  2695--2725, 2016.

\bibitem[Minsker et~al.(2015)]{minsker2015geometric}
Stanislav Minsker et~al.
\newblock Geometric median and robust estimation in banach spaces.
\newblock \emph{Bernoulli}, 21\penalty0 (4):\penalty0 2308--2335, 2015.

\bibitem[Hsu and Sabato(2016)]{hsu2016loss}
Daniel Hsu and Sivan Sabato.
\newblock Loss minimization and parameter estimation with heavy tails.
\newblock \emph{The Journal of Machine Learning Research}, 17\penalty0
  (1):\penalty0 543--582, 2016.

\bibitem[Lugosi and Mendelson(2017)]{lugosi2017sub}
G{\'a}bor Lugosi and Shahar Mendelson.
\newblock Sub-gaussian estimators of the mean of a random vector.
\newblock \emph{arXiv preprint arXiv:1702.00482}, 2017.

\bibitem[Joly and Lugosi(2016)]{joly2016robust}
Emilien Joly and G{\'a}bor Lugosi.
\newblock Robust estimation of u-statistics.
\newblock \emph{Stochastic Processes and their Applications}, 126\penalty0
  (12):\penalty0 3760--3773, 2016.

\bibitem[Laforgue et~al.(2019)Laforgue, Cl\'{e}men\c{c}on, and
  Bertail]{laforgue2019medians}
Pierre Laforgue, Stephan Cl\'{e}men\c{c}on, and Patrice Bertail.
\newblock On medians of (randomized) pairwise means.
\newblock In \emph{Proceedings of the 36th International Conference on Machine
  Learning (ICML 2019)}, 2019.

\bibitem[Bubeck et~al.(2013)Bubeck, Cesa-Bianchi, and
  Lugosi]{bubeck2013bandits}
S{\'e}bastien Bubeck, Nicolo Cesa-Bianchi, and G{\'a}bor Lugosi.
\newblock Bandits with heavy tail.
\newblock \emph{IEEE Transactions on Information Theory}, 59\penalty0
  (11):\penalty0 7711--7717, 2013.

\bibitem[Lugosi and Mendelson(2019)]{lugosi2019risk}
Gabor Lugosi and Shahar Mendelson.
\newblock Risk minimization by median-of-means tournaments.
\newblock \emph{Journal of the European Mathematical Society}, 2019.

\bibitem[Depersin and Lecu{\'e}(2019)]{depersin2019robust}
Jules Depersin and Guillaume Lecu{\'e}.
\newblock Robust subgaussian estimation of a mean vector in nearly linear time.
\newblock \emph{arXiv preprint arXiv:1906.03058}, 2019.

\bibitem[Laforgue et~al.(2020)Laforgue, Staerman, and
  Cl\'{e}men\c{c}on]{papier2}
P.~Laforgue, G.~Staerman, and S.~Cl\'{e}men\c{c}on.
\newblock How robust is the median-of-means? concentration bounds in presence
  of outliers.
\newblock arxiv.org/abs/2006.05240, 2020.

\bibitem[Lerasle et~al.(2019)Lerasle, Szabo, Mathieu, and Lecu{\'e}]{monk}
Matthieu Lerasle, Zoltan Szabo, Timoth{\'e}e Mathieu, and Guillaume Lecu{\'e}.
\newblock Monk -- outlier-robust mean embedding estimation by median-of-means.
\newblock In \emph{Proceedings of the 36th International Conference on Machine
  Learning (ICML 2019)}, 2019.

\bibitem[Lecu{\'e} et~al.(2018)Lecu{\'e}, Lerasle, and
  Mathieu]{lecue2018robust}
Guillaume Lecu{\'e}, Matthieu Lerasle, and Timoth{\'e}e Mathieu.
\newblock Robust classification via mom minimization.
\newblock \emph{arXiv preprint arXiv:1808.03106}, 2018.

\bibitem[Kantorovich and Rubinstein(1958)]{kantorovich1958}
Leonid~Vasilevich Kantorovich and Gennady~S Rubinstein.
\newblock On a space of completely additive functions.
\newblock \emph{Vestnik Leningrad. Univ}, 13\penalty0 (7):\penalty0 52--59,
  1958.

\bibitem[Boissard(2011)]{boissard2011}
Emmanuel Boissard.
\newblock Simple bounds for the convergence of empirical and occupation
  measures in 1-wasserstein distance.
\newblock \emph{Electron. J. Probab.}, 16(83):\penalty0 2296--2333, 2011.

\bibitem[Fournier and Guillin(2015)]{FG15}
Nicolas Fournier and Arnaud Guillin.
\newblock \emph{Probability Theory and Related Fields}, 162\penalty0
  (3):\penalty0 707--738, 2015.

\bibitem[Lee(1990)]{Lee90}
A.~J. Lee.
\newblock \emph{${U}$-statistics: Theory and practice}.
\newblock Marcel Dekker, Inc., New York, 1990.

\bibitem[Lecué and Lerasle(2020)]{lecue2017robust}
Guillaume Lecué and Matthieu Lerasle.
\newblock Robust machine learning by median-of-means: Theory and practice.
\newblock \emph{Ann. Statist.}, 48\penalty0 (2):\penalty0 906--931, 04 2020.
\newblock \doi{10.1214/19-AOS1828}.
\newblock URL \url{https://doi.org/10.1214/19-AOS1828}.

\bibitem[Sriperumbudur et~al.(2012)Sriperumbudur, Fukumizu, Gretton,
  Schölkopf, and Lanckriet]{Bharath2012}
Bharath~K. Sriperumbudur, Kenji Fukumizu, Arthur Gretton, Bernhard Schölkopf,
  and Gert R.~G. Lanckriet.
\newblock On the empirical estimation of integral probability metrics.
\newblock \emph{Electron. J. Statist.}, 6:\penalty0 1550--1599, 2012.

\bibitem[Chandola et~al.(2009)Chandola, Banerjee, and Kumar]{Chandola}
Varun Chandola, Arindam Banerjee, and Vipin Kumar.
\newblock Anomaly detection: A survey.
\newblock \emph{ACM Comput. Surv.}, 41\penalty0 (3):\penalty0 15:1--15:58,
  2009.
\newblock ISSN 0360-0300.

\bibitem[Heusel et~al.(2017)Heusel, Ramsauer, Unterthiner, Nessler, and
  Hochreiter]{Heusel}
Martin Heusel, Hubert Ramsauer, Thomas Unterthiner, Bernhard Nessler, and Sepp
  Hochreiter.
\newblock Gans trained by a two time-scale update rule converge to a local nash
  equilibrium.
\newblock In \emph{Advances in Neural Information Processing Systems 30}, pages
  6626--6637. 2017.

\bibitem[van~de Geer(2000)]{vdG00}
S.~van~de Geer.
\newblock \emph{Empirical Processes in {M}-Estimation}.
\newblock Cambridge University Press, 2000.

\bibitem[Kolmogorov and Tihomirov(1961)]{kolmogorovtihomirov}
A.~N. Kolmogorov and V.~M. Tihomirov.
\newblock On the empirical estimation of integral probability metrics.
\newblock \emph{American Mathematical Society Translations 2}, 17:\penalty0
  277--364, 1961.

\end{thebibliography}

\clearpage

\setcounter{section}{0}
\renewcommand{\thesection}{\Alph{section}}

\onecolumn
 { \Large \bf  Supplementary Material to the Article: \\
 When OT meets MoM: Robust estimation of Wasserstein Distance}

\vspace*{0.5cm}

\section{Technical Proofs }

In this section are detailed the proofs of the theoretical claims stated in the core article. We first recall a simple lemma on the difference between two median vectors.

\begin{lemme}\label{lem:median}
Let $\bm{a}$ and $\bm{b}$ be two vectors of $\mathbb{R}^d$.
Then it holds
\begin{equation*}
\big| \mathrm{median}(\bm{a}) - \mathrm{median}(\bm{b}) \big| \le \|\bm{a} - \bm{b}\|_\infty.
\end{equation*}
\end{lemme}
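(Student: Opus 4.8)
The plan is to prove the inequality componentwise, reducing it to a one-dimensional fact about how the median of a sorted list moves when each entry is perturbed by at most $\varepsilon := \|\bm{a} - \bm{b}\|_\infty$. First I would fix notation: write $\bm{a} = (a_1, \ldots, a_d)$ and $\bm{b} = (b_1, \ldots, b_d)$, so that $|a_i - b_i| \le \varepsilon$ for every $i$. The key observation is a monotonicity property of the median: if $\bm{a}$ and $\bm{a}'$ are two vectors with $a_i \le a_i'$ for all $i$, then $\mathrm{median}(\bm{a}) \le \mathrm{median}(\bm{a}')$. This follows because the $k$-th order statistic is a nondecreasing function of each coordinate (sorting preserves the coordinatewise order in the sense that the $k$-th smallest of a dominated vector is at most the $k$-th smallest of the dominating one), and the median is a fixed order statistic (or average of two, for even $d$).

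Next I would apply this monotonicity twice. Since $b_i \le a_i + \varepsilon$ for all $i$, monotonicity gives $\mathrm{median}(\bm{b}) \le \mathrm{median}(\bm{a} + \varepsilon \bm{1}) = \mathrm{median}(\bm{a}) + \varepsilon$, where the last equality uses translation equivariance of the median (adding a constant to every coordinate shifts the median by that constant). Symmetrically, $a_i \le b_i + \varepsilon$ yields $\mathrm{median}(\bm{a}) \le \mathrm{median}(\bm{b}) + \varepsilon$. Combining the two inequalities gives $|\mathrm{median}(\bm{a}) - \mathrm{median}(\bm{b})| \le \varepsilon = \|\bm{a} - \bm{b}\|_\infty$, which is the claim.

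The only mild subtlety — and the one step I would be careful to state cleanly rather than the "main obstacle" (there really isn't a hard obstacle here) — is handling the even-$d$ case, where the median is conventionally the average of the two middle order statistics. One should verify that both translation equivariance and coordinatewise monotonicity pass through this averaging, which they do since each order statistic individually has these properties and averaging preserves them. If one adopts a different convention for the median of an even-length list (e.g. the lower median), the argument is unchanged, as it only uses that the median is some order statistic or convex combination of order statistics. I would therefore present the proof in two short paragraphs: (i) establish that each order statistic is translation-equivariant and coordinatewise nondecreasing, hence so is the median under whatever convention; (ii) deduce the two-sided bound by sandwiching $\bm{b}$ between $\bm{a} - \varepsilon\bm{1}$ and $\bm{a} + \varepsilon\bm{1}$.
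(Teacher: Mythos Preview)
Your proposal is correct and follows essentially the same approach as the paper: both arguments use coordinatewise monotonicity of the median together with translation equivariance to sandwich $\mathrm{median}(\bm{b})$ between $\mathrm{median}(\bm{a}) - \varepsilon$ and $\mathrm{median}(\bm{a}) + \varepsilon$. Your version is slightly more detailed in justifying the monotonicity via order statistics and in explicitly treating the even-$d$ convention, but the underlying proof is the same.
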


\begin{proof}
It is direct to see that:
\begin{equation*}
\bm{a} \preceq \bm{b} \preceq \bm{c} ~~\Rightarrow~~ \mathrm{median}(\bm{a}) \le \mathrm{median}(\bm{b}) \le \mathrm{median}(\bm{c}).
\end{equation*}

Thus, for all $\bm{b}$ within the infinite ball of center $\bm{a}$ and radius $\epsilon$ it holds:
\begin{equation*}
\mathrm{median}(\bm{a}) - \epsilon = \mathrm{median}(\bm{a} -  \epsilon \bm{1}_d) \le \mathrm{median}(\bm{b}) \le \mathrm{median}(\bm{a} +  \epsilon \bm{1}_d) = \mathrm{median}(\bm{a}) + \epsilon.
\end{equation*}
Hence the conclusion.
\end{proof}


\subsection{Proof of Proposition 4}

We first show the consistency of $\Wmoucb(\hat{\mu}_n, \hat{\nu}_m)$, that of $\mathcal{W}(\hat{\mu}_\mathrm{MoM}, \mu)$ and $\Wmou(\hat{\mu}_n, \hat{\nu}_m)$ being then straightforward adaptations. 
 Assume that $\tilde{\tau}=\tau_{\mathbf{X}}  + \tau_{\mathbf{Y}}  - \tau_{\mathbf{X}}  \tau_{\mathbf{Y}}  < 1/2$, and $K_{\mathbf{X}} , K_{\mathbf{Y}} > 0$ such that $2(\tau_{\mathbf{X}}  + \tau_{\mathbf{Y}}  - \tau_{\mathbf{X}}  \tau_{\mathbf{Y}} ) < K_{\mathbf{X}}  K_{\mathbf{Y}}  / (nm)$.
The latter condition implies that the blocks containing no outlier are in majority.
Indeed, the number of contaminated blocks is upper bounded~by:
\begin{equation*}
n_\mathcal{O} K_{\mathbf{Y}}  + n_\mathcal{O} K_{\mathbf{X}}  - n_\mathcal{O}  n_\mathcal{O} \le (\tau_{\mathbf{X}}  + \tau_{\mathbf{Y}}  - \tau_{\mathbf{X}} \tau_{\mathbf{Y}} )nm < K_{\mathbf{X}}  K_{\mathbf{Y}} /2.
\end{equation*}
One may choose $\KX$ and $\KY$ the lower as possible such that the above condition is respected. Following this, it is a natural choice to set $\KX = \lceil \sqrt{2\tilde{\tau}}~n \rceil$ and $\KY = \lceil \sqrt{2\tilde{\tau}}~m \rceil$. 

Let $\mathcal{I}_{\mathbf{X}} $ (respectively $\mathcal{I}_{\mathbf{Y}} $) denote the set of indices of \textbf{X} blocks (respectively \textbf{Y} blocks) containing no outlier.
Let $\mathcal{K}$ be a bounded subspace of $\mathbb{R}^d$, and assume that $X, Y$ are valued in $\mathcal{X}, \mathcal{Y} \subset \mathcal{K}$.
Finally, we denote by $\overline{\phi}_{\mathbf{X} ,k}$ and $\overline{\phi}_{\mathbf{Y},l}$ the quantities
\begin{equation*}
\overline{\phi}_{\mathbf{X} ,k} = \dfrac{1}{\BX}\sum_{i \in \mathcal{B}^{\mathbf{X}}_k}\phi(X_i), \qquad \text{and} \qquad  \overline{\phi}_{\mathbf{Y},l} = \dfrac{1}{\BY}\sum_{j \in \mathcal{B}^{\mathbf{Y}}_l}\phi(Y_j).
\end{equation*}

Using the shortcut notation $\mathbb{E}_\mu\left[\phi\right] = \mathbb{E}_{X \sim \mu}\left[\phi(X)\right]$ and $\mathbb{E}_\nu\left[\phi\right] = \mathbb{E}_{Y \sim \nu}\left[\phi(Y)\right]$, first notice that:
\begin{align}\label{eq:major_1}
\Wmoucb(\hat{\mu}_n, \hat{\nu}_m) &= \underset{\phi \in \mathcal{B}_L}{\sup } ~~ \text{MoU}_{\mathbf{XY}}[h_{\phi}],\nonumber\\
&= \underset{\phi \in \mathcal{B}_L}{\sup } ~~ \underset{\substack{1 \le k \le K_{\mathbf{X}}\\[0.1cm]1 \le l \le K_{\mathbf{Y}}}}{\text{med}}\Big\{ \overline{\phi}_{\mathbf{X},k} - \overline{\phi}_{\mathbf{Y},l}\Big\},\nonumber\\
&= \underset{\phi \in \mathcal{B}_L}{\sup } ~~ \underset{\substack{1 \le k \le K_{\mathbf{X}}\\[0.1cm]1 \le l \le K_{\mathbf{Y}}}}{\text{med}}\Big\{ \overline{\phi}_{\mathbf{X},k}-\mathbb{E}_\mu[\phi] +\mathbb{E}_\mu[\phi] -\mathbb{E}_\nu[\phi] + \mathbb{E}_\nu[\phi] - \overline{\phi}_{\mathbf{Y},l}\Big\},\nonumber\\
& \leq \underset{\phi \in \mathcal{B}_L}{\sup} ~~ \underset{\substack{1 \le k \le K_{\mathbf{X}}\\[0.1cm]1 \le l \le K_{\mathbf{Y}}}}{\text{med}}\Big\{ \overline{\phi}_{\mathbf{X},k} - \mathbb{E}_\mu[\phi]  +\mathbb{E}_\nu[\phi] - \overline{\phi}_{\mathbf{Y},l} \Big\}+\mathcal{W}(\mu, \nu).
\end{align}

Conversely, it holds:
\begin{align}\label{eq:major_2}
\mathcal{W}(\mu, \nu) &= \underset{\phi \in \mathcal{B}_L}{\sup} \big\{\mathbb{E}_\mu\left[\phi\right] - \mathbb{E}_\nu\left[\phi\right]\big\},\nonumber\\
&\le \underset{\phi \in \mathcal{B}_L}{\sup} \left\{\mathbb{E}_\mu[\phi] - \overline{\phi}_{\mathcal{B}_\text{med}^{\mathbf{X}}}  +\overline{\phi}_{\mathcal{B}_\text{med}^{\mathbf{Y}}} - \mathbb{E}_\nu[\phi] + \overline{\phi}_{\mathcal{B}_\text{med}^{\mathbf{X}}} -\overline{\phi}_{\mathcal{B}_\text{med}^{\mathbf{Y}}} \right\},\nonumber\\
&\leq \underset{\phi \in \mathcal{B}_L}{\sup} ~~ \underset{\substack{1 \le k \le K_{\mathbf{X}}\\[0.1cm]1 \le l \le K_{\mathbf{Y}}}}{\text{med}} \Big\{ \mathbb{E}_\mu[\phi] - \overline{\phi}_{\mathbf{X}, k}  +\overline{\phi}_{\mathbf{Y}, l} - \mathbb{E}_\nu[\phi] \Big\} + \Wmoucb(\hat{\mu}_n, \hat{\nu}_m),
\end{align}
where $\mathcal{B}^{\mathbf{X}}_\text{med}$ and $\mathcal{B}^{\mathbf{Y}}_\text{med}$ are the median blocks of $\overline{\phi}_{\mathbf{X},k} - \overline{\phi}_{\mathbf{Y},l}$ for $1 \le k \le K_{\mathbf{X}}$ and $1 \le l \le K_{\mathbf{Y}}$.
From \Cref{eq:major_1,eq:major_2}, we deduce that:
\begin{align}\label{eq:bounds}
\big \vert \Wmoucb(\hat{\mu}_n, \hat{\nu}_m) - \mathcal{W}(\mu, \nu) \big \vert &\leq \underset{\phi \in \mathcal{B}_L}{\sup } ~~ \underset{\substack{1 \le k \le K_{\mathbf{X}}\\[0.1cm]1 \le l \le K_{\mathbf{Y}}}}{\text{med}} \Big\{ \big \vert \overline{\phi}_{\mathbf{X},k}-\mathbb{E}_\mu[\phi] + \mathbb{E}_\nu[\phi] - \overline{\phi}_{\mathbf{Y},l} \big \vert \Big\},\\[0.2cm]
&\le  \underset{k \in \mathcal{I}_{\mathbf{X}},~l \in \mathcal{I}_{\mathbf{Y}}}{\text{sup}} ~~ \underset{\phi \in \mathcal{B}_L}{\sup} ~ \big \vert \overline{\phi}_{\mathbf{X},k}-\mathbb{E}_\mu[\phi] + \mathbb{E}_\nu[\phi] - \overline{\phi}_{\mathbf{Y},l} \big \vert,\nonumber\\[0.2cm]
&\le  \underset{k \in \mathcal{I}_{\mathbf{X}}}{\text{sup}} ~ \underset{\phi \in \mathcal{B}_L}{\sup } \big \vert \overline{\phi}_{\mathbf{X},k}-\mathbb{E}_\mu[\phi]\big \vert + \underset{l \in \mathcal{I}_{\mathbf{Y}}}{\text{sup}} ~ \underset{\phi \in \mathcal{B}_L}{\sup } \big \vert \mathbb{E}_\nu[\phi] - \overline{\phi}_{\mathbf{Y},l} \big \vert,\nonumber
\end{align}

where we have used  the fact that $\mathcal{I}_{\mathbf{X}} \times \mathcal{I}_{\mathbf{Y}}$ represents a majority of blocks, and the subadditivity of the supremum.
By independence between samples \textbf{X} and \textbf{Y}, and between the blocks, it holds:
\begin{align*}
&\mathbb{P} \left\{ \big \vert  \mathcal{W}_\mathrm{MoU}(\hat{\mu}_n, \hat{\nu}_m) - \mathcal{W}(\mu, \nu) \big \vert \underset{\substack{n\rightarrow +\infty \\ m \rightarrow + \infty}}{\longrightarrow} 0  \right\}\\
\geq~&\prod_{k \in \mathcal{I}_{\mathbf{X}}} \mathbb{P}\left\{ \underset{\phi \in \mathcal{B}_L}{\sup} \big\vert \overline{\phi}_{\mathbf{X},k}-\mathbb{E}_\mu[\phi] \big\vert \underset{\substack{n\rightarrow +\infty }}{\longrightarrow} 0  \right\} \cdot \prod_{l \in \mathcal{I}_{\mathbf{Y}}} \mathbb{P}\left\{ \underset{\phi \in \mathcal{B}_L}{\sup} \big\vert \overline{\phi}_{\mathbf{Y},l} -\mathbb{E}[\phi] \big\vert \underset{\substack{ m \rightarrow + \infty}}{\longrightarrow} 0  \right\}.
\end{align*}

Now, the arguments to get the right-hand side equal to $1$ are similar to those used in Lemma 3.1 and Proposition 3.2 in \cite{Bharath2012}.
We expose them explicitly for the sake of clarity.

Let $\mathcal{N}(\varepsilon, \mathcal{B}_L, L^1(\mu))$ be the \emph{covering number} of $\mathcal{B}_L$ which is the minimal number of $L^1(\mathbb{\mu})$ balls of radius $\varepsilon$ needed to cover $\mathcal{B}_L$.
Let $\mathcal{H}(\varepsilon, \mathcal{B}_L, L^1(\mu))$ be the \emph{entropy} of $\mathcal{B}_L$, defined as $\mathcal{H}(\varepsilon, \mathcal{B}_L, L^1(\mu))=\log \mathcal{N}(\varepsilon, \mathcal{B}_L, L^1(\mu)) $.
Let $F$ be the minimal enveloppe function such that $F(x)= \sup_{\phi \in\mathcal{B}_L} \vert \phi(x) \vert$.
We need to check that $\int F d\mu$ and $\int F d\nu$ are finite and that  $(1/n) \mathcal{H}(\varepsilon,\mathcal{B}_L, L^1(\hat{\mu}_n))$ and $(1/m)\mathcal{H}(\varepsilon,\mathcal{B}_L, L^1(\hat{\nu}_m))$ go to zero when $n$ and $m$ go to infinity.
Then, we can apply Theorem 3.7 in \cite{vdG00} which ensures the uniform (a.s.) convergence of empirical processes.
For any $\phi \in \mathcal{B}_L$, one has
\begin{equation}\label{support}
\phi(x)\leq \underset{x\in \mathcal{K}}{ \sup} |\phi(x)| \leq\underset{x,y\in \mathcal{K}}{ \sup} |\phi(x)-\phi(y)| \leq \underset{x,y\in \mathcal{K}}{\sup} \|x-y\|= \text{diam}(\mathcal{K}) < +\infty.
\end{equation}

Therefore $F(x)$ is finite, and following Lemma 3.1. in \cite{kolmogorovtihomirov} we have
\begin{equation*}
\mathcal{H}(\varepsilon,\mathcal{B}_L, \|\cdot\|_{\infty} )\leq \mathcal{N}(\varepsilon/4, \mathcal{K}, \| \cdot \|_2 ) \log \left(2 \left\lceil \frac{2 \text{diam} (\mathcal{K})}{\varepsilon}\right\rceil +1\right).
\end{equation*}

Since $\mathcal{H}(\varepsilon,\mathcal{B}_L, L^1(\hat{\mu}_n)) \leq \mathcal{H}(\varepsilon,\mathcal{B}_L, \|\cdot\|_\infty)$ and $\mathcal{H}(\varepsilon,\mathcal{B}_L, L^1(\hat{\nu}_m)) \leq \mathcal{H}(\varepsilon,\mathcal{B}_L, \|\cdot\|_\infty)$ then when, respectively, $n$ and $m$ go to infinity, we have
\begin{equation*}
\frac{1}{n} \mathcal{H}(\varepsilon,\mathcal{B}_L, L^1(\hat{\mu}_n)) \overset{\mu}{\longrightarrow} 0, \qquad \text{and} \qquad \frac{1}{m} \mathcal{H}(\varepsilon,\mathcal{B}_L, L^1(\hat{\nu}_m)) \overset{\nu}{\longrightarrow} 0,
\end{equation*}
which leads to the desired result.

{\bf Adaptation to other estimators.}
The above proof can be adapted in a straightforward fashion to $\mathcal{W}(\hat{\mu}_\mathrm{MoM}, \mu)$ and $\mathcal{W}_\mathrm{MoU-diag}(\hat{\mu}_n, \hat{\nu}_m)$.
Indeed, it holds
\begin{equation*}
\mathcal{W}(\hat{\mu}_\text{MoM}, \mu) = \sup_{\phi \in \mathcal{B}_L} ~ \underset{1 \le k \le K_{\mathbf{X}}}{\text{med}} ~ \left| \overline{\phi}_{\mathbf{X}, k} - \mathbb{E}_\mu\left[\phi\right] \right|,
\end{equation*}
and
\begin{equation*}
\Big| \mathcal{W}_\mathrm{MoU-diag}(\hat{\mu}_n, \hat{\nu}_m) - \mathcal{W}(\mu, \nu) \Big| \le \sup_{\phi \in \mathcal{B}_L} ~ \underset{\substack{1 \le k \le K_{\mathbf{X}}\\[0.1cm]1 \le l \le K_{\mathbf{Y}}}}{\text{med}} ~ \left| \overline{\phi}_{\mathbf{X}, k} - \mathbb{E}_\mu\left[\phi\right] + \mathbb{E}_\nu\left[\phi\right] - \overline{\phi}_{\mathbf{Y}, k} \right|.
\end{equation*}

It is then direct to adapt the reasoning from \Cref{eq:bounds}.
\qed

\subsection{Proof of Proposition 5}

Let $\psi \in \mathcal{B}_L$.
From \Cref{support}, we know that $-\text{diam}(\mathcal{K}) \le \psi(X) \le \text{diam}(\mathcal{K})$, so that $\psi(X)$ is in particular sub-Gaussian with parameter $\lambda = \text{diam}(\mathcal{K})$.
A direct application of Proposition 1 in \cite{papier2} then gives that for all $\delta \in ]0, e^{-4n\sqrt{2\tau_{\mathbf{X}}}}]$ and $K_{\mathbf{X}}= \lceil\sqrt{2\tau_{\mathbf{X}}} n \rceil$ , it holds with probability at least $1 - \delta$:
\begin{equation}\label{eq:dev_psi}
\Big| \mathrm{MoM}_{\mathbf{X}}[\psi] - \mathbb{E}_\mu\left[\psi\right]\Big| \le 4~\text{diam}(\mathcal{K})~\Gamma(\tau_{\mathbf{X}}) ~ \sqrt{\frac{\log(1/\delta)}{n}},
\end{equation}
with $\Gamma\colon \tauX \mapsto \sqrt{1 + \sqrt{2\tauX}} / \sqrt{1 - 2\tauX}$.
Using \Cref{lem:median}, observe also that $\forall (\phi, \psi)\in \mathcal{B}_L^2$ it holds:
\begin{align}\label{eq:major}
\big|\mathrm{MoM}_{\mathbf{X}}[\phi] - \mathbb{E}_\mu\left[\phi\right]\big| &\le \big|\mathrm{MoM}_{\mathbf{X}}[\phi] - \mathrm{MoM}_{\mathbf{X}}[\psi]\big| + \big|\mathbb{E}_\mu\left[\phi\right] - \mathbb{E}_\mu\left[\psi\right]\big| + \big|\mathrm{MoM}_{\mathbf{X}}[\psi] - \mathbb{E}_\mu\left[\psi\right]\big|,\nonumber\\
&\le 2 \|\phi - \psi\|_\infty + \big|\mathrm{MoM}_{\mathbf{X}}[\psi] - \mathbb{E}_\mu\left[\psi\right]\big|.
\end{align}

Now, let $\zeta > 0$, and $\psi_1, \ldots, \psi_{\mathcal{N}(\zeta, \mathcal{B}_L, \|\cdot\|_\infty)}$ be a $\zeta$-coverage of $\mathcal{B}_L$ with respect to $\|\cdot\|_\infty$.
We know from \cite{Bharath2012} that there exists $C_L > 0$ such that for all $\zeta > 0$ it holds:
\begin{equation}\label{eq:entropy}
\log(\mathcal{N}(\zeta, \mathcal{B}_L, \|\cdot\|_\infty)) \le C_L^2 (1/\zeta)^d
\end{equation}

From now on, we use $\mathcal{N}=\mathcal{N}(\zeta, \mathcal{B}_L, \|\cdot\|_\infty)$ for notation simplicity.
Let $\phi$ be an arbitrary element of $\mathcal{B}_L$.
By definition, there exists $i \le \mathcal{N}$ such that $\|\phi - \psi_i\|_\infty \le \zeta$.
\Cref{eq:major} then gives:
\begin{equation}\label{eq:decompo}
\Big|\mathrm{MoM}_{\mathbf{X}}[\phi] - \mathbb{E}_\mu\left[\phi\right]\Big| \le 2\zeta + \Big|\mathrm{MoM}_{\mathbf{X}}[\psi_i] - \mathbb{E}_\mu\left[\psi_i\right]\Big|.
\end{equation}

Applying \Cref{eq:dev_psi} to every $\psi_i$, the union bound gives that with probability at least $1 - \delta$ it holds:
\begin{equation*}
\sup_{i \le \mathcal{N}} \Big|\mathrm{MoM}_{\mathbf{X}}[\psi_i] - \mathbb{E}_\mu\left[\psi_i\right]\Big| \le 4 ~ \text{diam}(\mathcal{K})~\Gamma(\tau_{\mathbf{X}})~ \sqrt{\frac{\log(\mathcal{N}/\delta)}{n}}.
\end{equation*}

Taking the supremum in both sides of \Cref{eq:decompo}, it holds with probability at least $1 - \delta$:
\begin{equation*}
\sup_{\phi \in \mathcal{B}_L} \Big|\mathrm{MoM}_{\mathbf{X}}[\phi] - \mathbb{E}_\mu\left[\phi\right]\Big| \le 2\zeta + 4~\text{diam}(\mathcal{K})~\Gamma(\tau_{\mathbf{X}}) ~ \sqrt{\frac{C_L^2 \zeta^{-d} + \log(1/\delta)}{n}}.
\end{equation*}

Choosing $\zeta \sim 1/ n^{1/(d+2)}$ and breaking the square root finally gives that it holds with probability at least $1 - \delta$:
\begin{equation*}
\sup_{\phi \in \mathcal{B}_L} \Big|\mathrm{MoM}_{\mathbf{X}}[\phi] - \mathbb{E}_\mu\left[\phi\right]\Big| \le \frac{C_1(\tau_{\mathbf{X}})}{n^{1/(d+2)}} + C_2(\tau_{\mathbf{X}})\sqrt{\frac{\log(1/\delta)}{n}},
\end{equation*}
with $C_1(\tau_{\mathbf{X}}) = 2 + C_L C_2(\tau_{\mathbf{X}})$, and $C_2(\tau_{\mathbf{X}}) = 4~\text{diam}(\mathcal{K})~\Gamma(\tau_{\mathbf{X}})$.
\bigskip

{\bf Adaptation to MoU.}
From \Cref{support}, we get that the kernel $h_\phi\colon (X, Y) \mapsto \phi(X) - \phi(Y)$ has finite essential supremum $\|h_\phi(X, Y)\|_\infty \le \text{diam}(\mathcal{K})$.
Using Proposition 4 in \cite{papier2} with the same reasoning as above leads to the desired result, multiplying constants by a $2$ factor.
\qed

\subsection{Proof of Theorem 7}

Since $n^{\frac{1}{d+2} + \frac{1 - \beta}{2}} \ge C_1(\tauX)/(2C_2(\tauX)(2\tauX)^\frac{1}{4})$, then for all $\delta \in ]0, e^{-4n\sqrt{2\tau_{\mathbf{X}}}}]$, it holds:
\begin{equation*}
\frac{C_1(\tau_{\mathbf{X}})}{n^{1/(d+2)}} \le C_2(\tau_{\mathbf{X}}) \sqrt{\frac{4n \sqrt{2\tau_{\mathbf{X}}}}{n^\beta}} \le C_2(\tau_{\mathbf{X}}) \sqrt{\frac{\log(1/\delta)}{n^\beta}}.
\end{equation*}

One then has:
\begin{equation*}
\mathcal{W}(\hat{\mu}_\mathrm{MoM}, \mu) \ge 0 \ge \frac{C_1(\tau_{\mathbf{X}})}{n^{1/(d+2)}} - C_2(\tau_{\mathbf{X}})\sqrt{\frac{\log(1/\delta)}{n^\beta}}.
\end{equation*}

Combining with the first results of Proposition 4, for all $\delta \in ]0, e^{-4n\sqrt{2\tau_{\mathbf{X}}}}]$, it holds with probability at least $1 - \delta$:
\begin{equation*}
\left| \mathcal{W}(\hat{\mu}_\mathrm{MoM}, \mu) - \frac{C_1(\tau_{\mathbf{X}})}{n^{1/(d+2)}}\right| \le C_2(\tau_{\mathbf{X}})\sqrt{\frac{\log(1/\delta)}{n^\beta}}.
\end{equation*}
Reverting the inequation gives that it holds
\begin{equation}\label{eq:dev}
\mathbb{P}\left\{\left| \mathcal{W}(\hat{\mu}_\mathrm{MoM}, \mu) - \frac{C_1(\tau_{\mathbf{X}})}{n^{1/(d+2)}}\right| > t \right\} \le e^{-n^\beta t^2/{C_2}^2(\tau_{\mathbf{X}})},
\end{equation}
for all $t$ such that
\begin{equation}\label{eq:t}
t \ge (32\;\tau_{\mathbf{X}})^{1/4}C_2(\tau_{\mathbf{X}})~\sqrt{n^{1 - \beta}} = \frac{(32\;\tau_{\mathbf{X}})^{1/4}}{\sqrt{\tau_{\mathbf{X}}}}C_2(\tau_{\mathbf{X}})\sqrt{n^{1 - \beta}~\frac{n_\mathcal{O}}{n}}.
\end{equation}

%
One may finally use that for a nonnegative random variable it holds:

\begin{align}
\mathbb{E}\left| \mathcal{W}(\hat{\mu}_\mathrm{MoM}, \mu) - \frac{C_1(\tau_{\mathbf{X}})}{n^{1/(d+2)}}\right| &= \int_0^\infty \mathbb{P}\left\{\left| \mathcal{W}(\hat{\mu}_\mathrm{MoM}, \mu) - \frac{C_1(\tau_{\mathbf{X}})}{n^{1/(d+2)}}\right| > t \right\}dt,\nonumber\\[0.3cm]
&\le \int_0^{\frac{(32\;\tau_{\mathbf{X}})^{1/4}}{\sqrt{\tau_{\mathbf{X}}}}C_{\mathcal{O}}C_2(\tau_{\mathbf{X}})\sqrt{n^{\alpha_{\mathcal{O}} - \beta}}}1dt + \int_0^\infty e^{-n^\beta t^2/{C_2}^2(\tau_{\mathbf{X}})}dt,\nonumber\\[0.3cm]
&\le \frac{(32\;\tau_{\mathbf{X}})^{1/4}}{\sqrt{\tau_{\mathbf{X}}}}~\frac{C_\mathcal{O}C_2(\tau_{\mathbf{X}})}{n^{(\beta- \alpha_\mathcal{O})/2}} + \frac{\sqrt{\pi}~C_2(\tau_{\mathbf{X}})}{2~n^{\beta/2}}.\nonumber\\[0.3cm]
&= 2\;(2/\tauX)^{1/4}~\frac{C_\mathcal{O}C_2(\tau_{\mathbf{X}})}{n^{(\beta- \alpha_\mathcal{O})/2}} + \frac{\sqrt{\pi}~C_2(\tau_{\mathbf{X}})}{2~n^{\beta/2}}.\label{eq:final}
\end{align}

Where the second line holds thanks to Assumption 6.
\bigskip

{\bf Adaptation to MoU.}
The adaptation is straightforward, up to \Cref{eq:t}, that now writes:
\begin{align*}
t &\ge 2 \times (32(\tau_{\mathbf{X}} + \tau_{\mathbf{Y}}))^{1/4} C_2(\tau_{\mathbf{X}}  + \tau_{\mathbf{Y}} )~\sqrt{n^{1 - \beta}},\\[0.2cm]
&= 2 \times\frac{(32(\tau_{\mathbf{X}} + \tau_{\mathbf{Y}}))^{1/4}}{\sqrt{\tau_{\mathbf{X}} + \tau_{\mathbf{Y}} }}C_2(\tau_{\mathbf{X}}  + \tau_{\mathbf{Y}} )\sqrt{n^{1 - \beta}~\left(\frac{n_\mathcal{O}}{n} + \frac{m_\mathcal{O}}{m}\right)}.
\end{align*}
Using Assumption 6 on both samples \textbf{X} and \textbf{Y}, it leads to the desired results.
%
%
%
%
%
\qed
\clearpage

\section{Additional material of the numerical part}

In this part, we introduce algorithms and additional experiments that could not be in the paper for lack of space.
\subsection{Additional algorithms}

Here, algorithms  to compute $\Wmou(\mu_n,\nu_n)$ and $\Wmoucb(\mu_n,\nu_n)$ are displayed.

\begin{algorithm}[H]
\caption{Computation of $\Wmou(\mu_n,\nu_n)$. }

\textit{Initialization:} $\eta$, the learning rate. $c$, the clipping parameter.  $w_0$ the initial weights.
      \begin{algorithmic}[1]
          \For{$t=0,\ldots,n_{\text{iter}}$}

         \vspace*{0.1cm}
         \State Sample $K=\KX \wedge \KY$ disjoint blocks $\mathcal{B}^{\mathbf{XY}}_{1,1},\mathcal{B}^{\mathbf{XY}}_{2,2}, \ldots,\mathcal{B}^{\mathbf{XY}}_{k,k}, \ldots  \mathcal{B}^{\mathbf{XY}}_{K,K}$ from a sampling scheme 
         \vspace*{0.1cm}
         \State Find the median block $\mathcal{B}_{med}^{\mathbf{XY}}$ 
         \vspace*{0.1cm}
         \State \begin{align*} G_{w} \longleftarrow  & \bigl\lfloor  K / n  \bigr\rfloor \underset{(i,j) \in \mathcal{B}_{med}^{\mathbf{XY}}}{\sum}\nabla _{w}   \left[ \phi_w(X_i)- \phi_w(Y_j) \right]
         				\end{align*}   
         \vspace*{0.1cm}    
         \State ~7.1~$w \leftarrow w + \eta \times \text{RMSProp}(w, G_w)$
         \State ~7.2~ $w \leftarrow \text{clip}(w, -c,c)$
\EndFor \\
 \textbf{Output}: $w,\; \tWmoudiag, \; \phi_w$.
      \end{algorithmic}
      \label{algo:MoU_DIAG}
\end{algorithm}
\begin{algorithm}[H]
\caption{Computation of  $\Wmoucb(\mu_n,\nu_n)$. }

\textit{Initialization:} $\eta$, the learning rate. $c$, the clipping parameter.  $w_0$ the initial weights.
      \begin{algorithmic}[1]
          \For{$t=0,\ldots,n_{\text{iter}}$}

         \vspace*{0.1cm}
         \State Sample $\KX \times \KY$ disjoint blocks $\mathcal{B}^{\mathbf{XY}}_{1,1},\ldots,\mathcal{B}^{\mathbf{XY}}_{k,l}, \ldots  \mathcal{B}^{\mathbf{XY}}_{\KX,\KY}$ from a sampling scheme 
         \vspace*{0.1cm}
         \State Find the median block $\mathcal{B}_{med}^{\mathbf{XY}}$ 
         \vspace*{0.1cm}
         \State \begin{align*} G_{w} \longleftarrow  & \bigl\lfloor \KX /n \bigr\rfloor  \times \bigl\lfloor \KY / m \bigr\rfloor  \underset{(i,j) \in \mathcal{B}_{med}^{\mathbf{XY}}}{\sum}\nabla _{w}   \left[ \phi_w(X_i)- \phi_w(Y_j) \right]
         				\end{align*}   
         \vspace*{0.1cm}    
         \State ~7.1~$w \leftarrow w + \eta \times \text{RMSProp}(w, G_w)$
         \State ~7.2~ $w \leftarrow \text{clip}(w, -c,c)$
\EndFor \\
 \textbf{Output}: $w,\; \tWmoucb, \; \phi_w$.
      \end{algorithmic}
      \label{MWGAN}
\end{algorithm}
\subsection{Additional experiments}

In this part, numerical results for $\tWmoucb$ and $\tWmom$, related to the Section 4.2 of the paper, are displayed. Results of both experiments, depicted in Figure \ref{fig:error2} and \ref{fig:CV2}, are quite similar due to the simplicity of the problem. 

\clearpage
 \begin{figure}[h]
\begin{center}
\includegraphics[height=.6\textheight, trim=0cm .0cm 0cm 0cm,clip=true]{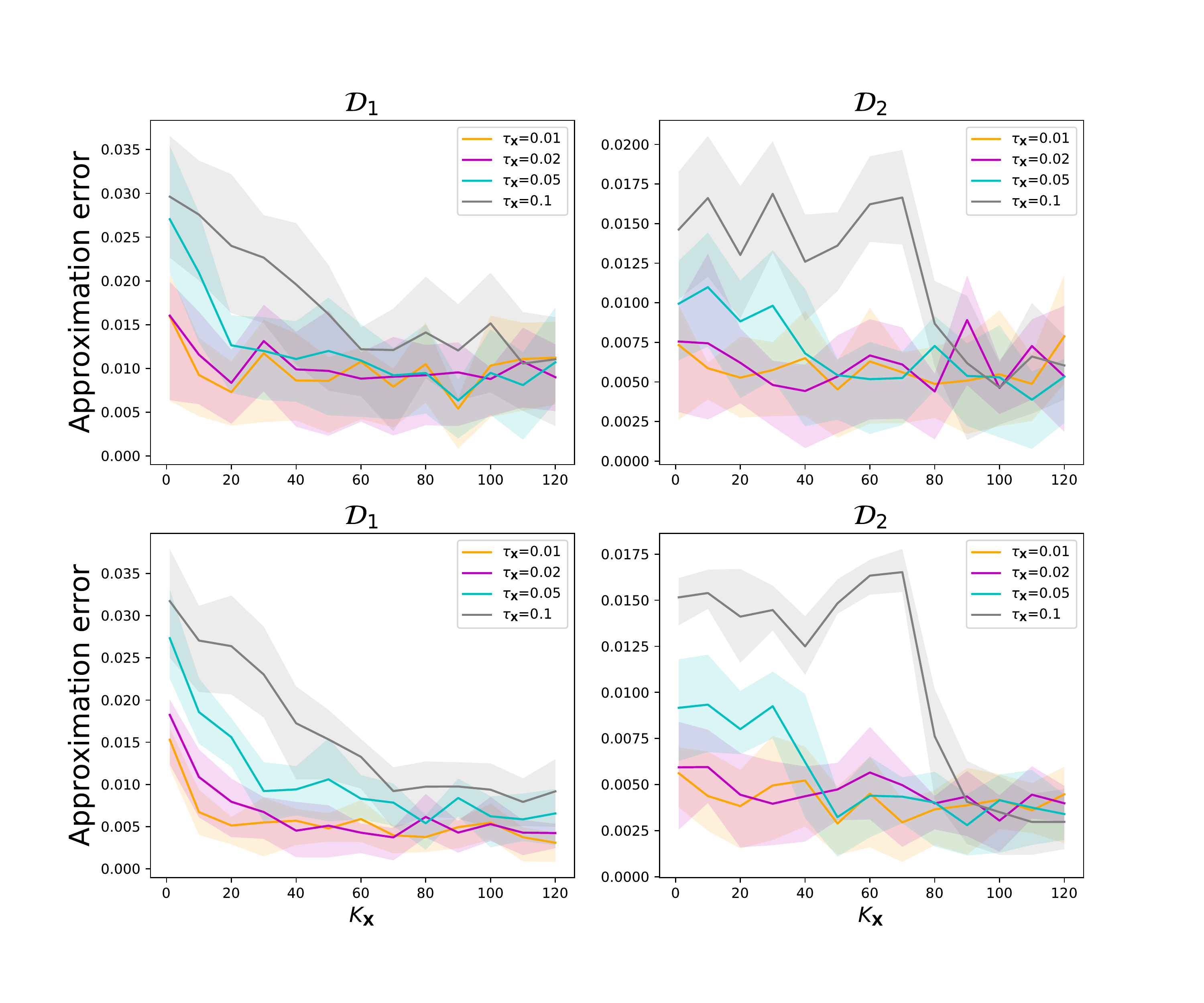}
\end{center}
\caption{$\tWmoucb$ (top) and $\tWmom$ (bottom)  over $\KX$ for different anomalies proportion $\tau_{X}$ on $\mathcal{D}_1$ (left) and $\mathcal{D}_2$ (right).}
\label{fig:error2}
\end{figure}
\clearpage
 \begin{figure}[h]
\begin{center}
\includegraphics[height=.5\textheight, trim=0cm .0cm 0cm 0cm,clip=true]{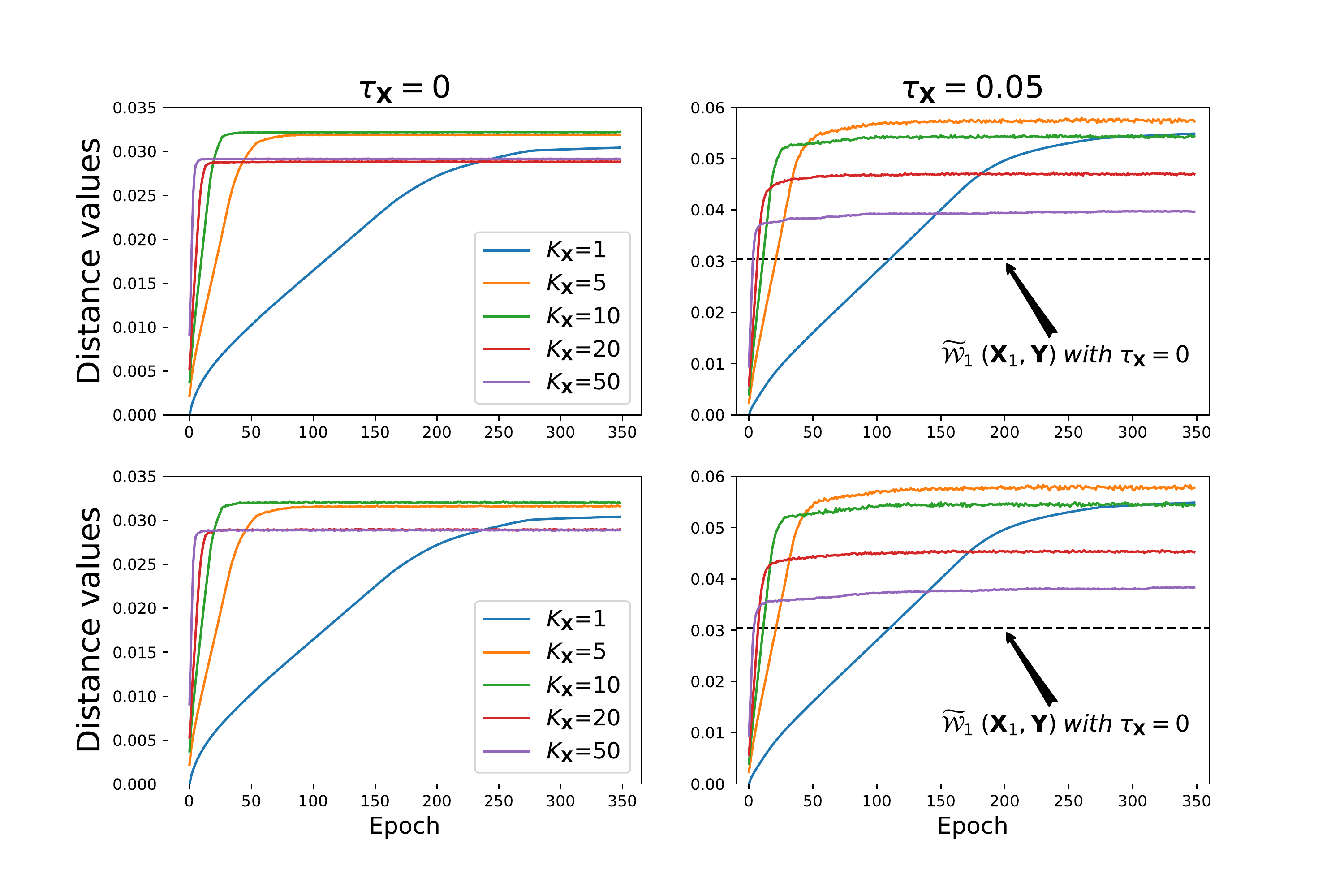}
\caption{Convergence of $\tWmoucb$ (top) and $\tWmom$ (bottom) without anomalies (left) and with 5\% anomalies (right) for different $\KX$.}
\label{fig:CV2}
\end{center}
\end{figure}
\end{document}